\let\mathbb\varmathbb
\crefname{lemma}{Lemma}{Lemmas}
\crefname{fact}{Fact}{Facts}
\crefname{theorem}{Theorem}{Theorems}
\crefname{corollary}{Corollary}{Corollaries}
\crefname{claim}{Claim}{Claims}
\crefname{example}{Example}{Examples}
\crefname{algorithm}{Algorithm}{Algorithms}
\crefname{problem}{Problem}{Problems}
\crefname{definition}{Definition}{Definitions}
\crefname{exercise}{Exercise}{Exercises}
\crefname{condition}{Condition}{Conditions}
\newtheorem{theorem}{Theorem}[section]
\newtheorem*{theorem*}{Theorem}
\newtheorem{lemma}[theorem]{Lemma}
\newtheorem*{lemma*}{Lemma}
\newtheorem{fact}[theorem]{Fact}
\newtheorem*{fact*}{Fact}
\newtheorem*{proposition*}{Proposition}
\newtheorem*{corollary*}{Corollary}
\newtheorem*{hypothesis*}{Hypothesis}
\newtheorem*{conjecture*}{Conjecture}
\theoremstyle{definition}
\newtheorem{definition}[theorem]{Definition}
\newtheorem*{definition*}{Definition}
\newtheorem*{construction*}{Construction}
\newtheorem*{example*}{Example}
\newtheorem*{question*}{Question}
\newtheorem{algorithm}[theorem]{Algorithm}
\newtheorem*{algorithm*}{Algorithm}
\newtheorem{assumption}[theorem]{Assumption}
\newtheorem*{assumption*}{Assumption}
\newtheorem{problem}[theorem]{Problem}
\newtheorem*{problem*}{Problem}
\newtheorem*{openquestion*}{Open Question}
\theoremstyle{remark}
\newtheorem*{claim*}{Claim}
\newtheorem*{remark*}{Remark}
\newtheorem*{observation*}{Observation}
\let\originalleft\left
\let\originalright\right
\renewcommand{\left}{\mathopen{}\mathclose\bgroup\originalleft}
\renewcommand{\right}{\aftergroup\egroup\originalright}
\let\latexparagraph\paragraph
\RenewDocumentCommand{\paragraph}{som}{%
  \IfBooleanTF{#1}
    {\latexparagraph*{#3}}
    {\IfNoValueTF{#2}
       {\latexparagraph{\maybe@addperiod{#3}}}
       {\latexparagraph[#2]{\maybe@addperiod{#3}}}%
  }%
}
\newcommand{\maybe@addperiod}[1]{%
  #1\@addpunct{.}%
}
\newcommand{\Authornotecolored}[3]{}
\newcommand{\Authorcomment}[2]{}
\newcommand{\Authorfnote}[2]{}
\newcommand{\paren}[1]{(#1)}
\newcommand{\Paren}[1]{\left(#1\right)}
\newcommand{\Brac}[1]{\left[#1\right]}
\newcommand{\abs}[1]{\lvert#1\rvert}
\newcommand{\Abs}[1]{\left\lvert#1\right\rvert}
\newcommand{\Set}[1]{\left\{#1\right\}}
\newcommand{\norm}[1]{\lVert#1\rVert}
\newcommand{\Norm}[1]{\left\lVert#1\right\rVert}
\newcommand{\snorm}[1]{\norm{#1}^2}
\newcommand{\iprod}[1]{\langle#1\rangle}
\newcommand{\Psymb}{\mathbb{P}}
\newcommand{\given}{\mathrel{}\middle\vert\mathrel{}}
\newcommand{\suchthat}{\;\middle\vert\;}
\newcommand{\from}{\colon}
\newcommand\bdot\bullet
\DeclareMathOperator{\Ind}{\mathbf 1}
\DeclareMathOperator{\OPT}{OPT}
\DeclareMathOperator{\poly}{poly}
\DeclareMathOperator{\argmin}{argmin}
\DeclareMathOperator{\supp}{supp}
\DeclareMathOperator{\sign}{sign}
\newcommand{\Z}{\mathbb Z}
\newcommand{\N}{\mathbb N}
\newcommand{\R}{\mathbb R}
\newcommand{\cA}{\mathcal A}
\newcommand{\cC}{\mathcal C}
\newcommand{\cS}{\mathcal S}
\newcommand{\cU}{\mathcal U}
\newcommand{\cX}{\mathcal X}
\newcommand{\bbS}{\mathbb S}
\renewcommand{\leq}{\leqslant}
\renewcommand{\geq}{\geqslant}
\let\epsilon=\varepsilon
\numberwithin{equation}{section}
\newcommand\MYcurrentlabel{xxx}
\newcommand{\MYstore}[2]{%
  \global\expandafter \def \csname MYMEMORY #1 \endcsname{#2}%
}
\newcommand{\MYload}[1]{%
  \csname MYMEMORY #1 \endcsname%
}
\newcommand{\MYnewlabel}[1]{%
  \renewcommand\MYcurrentlabel{#1}%
  \MYoldlabel{#1}%
}
\newcommand{\MYdummylabel}[1]{}
\newcommand{\torestate}[1]{%
  \let\MYoldlabel\label%
  \let\label\MYnewlabel%
  #1%
  \MYstore{\MYcurrentlabel}{#1}%
  \let\label\MYoldlabel%
}
\newcommand{\restatetheorem}[1]{%
  \let\MYoldlabel\label
  \let\label\MYdummylabel
  \begin{theorem*}[Restatement of \cref{#1}]
    \MYload{#1}
  \end{theorem*}
  \let\label\MYoldlabel
}
\newcommand{\restatelemma}[1]{%
  \let\MYoldlabel\label
  \let\label\MYdummylabel
  \begin{lemma*}[Restatement of \cref{#1}]
    \MYload{#1}
  \end{lemma*}
  \let\label\MYoldlabel
}
\newcommand{\restateprop}[1]{%
  \let\MYoldlabel\label
  \let\label\MYdummylabel
  \begin{proposition*}[Restatement of \cref{#1}]
    \MYload{#1}
  \end{proposition*}
  \let\label\MYoldlabel
}
\newcommand{\restatefact}[1]{%
  \let\MYoldlabel\label
  \let\label\MYdummylabel
  \begin{fact*}[Restatement of \cref{#1}]
    \MYload{#1}
  \end{fact*}
  \let\label\MYoldlabel
}
\newcommand{\restate}[1]{%
  \let\MYoldlabel\label
  \let\label\MYdummylabel
  \MYload{#1}
  \let\label\MYoldlabel
}
\newcommand{\sse}{\subseteq}
\newcommand{\e}{\epsilon}
\DeclareMathOperator{\Span}{Span}
\newcommand{\err}{\mathrm{err}}
\newcommand{\clwed}[3]{\mathrm{C}_{#1,#2,#3}}
\newcommand{\hclwed}[4]{\mathrm{H}_{#1,#2,#3,#4}}
\newcommand{\nhclwed}[4]{\mathrm{NH}_{#1,#2,#3,#4}}
\newcommand{\clwem}[3]{\mathrm{CLWE}\Paren{#1,#2,#3}}
\newcommand{\hclwem}[4]{\mathrm{HCLWE}\Paren{#1,#2,#3,#4}}
\newcommand{\nhclwem}[4]{\mathrm{NHCLWE}\Paren{#1,#2,#3,#4}}
\newcommand{\clwe}{\mathrm{CLWE}}
\newcommand{\negl}{\mathrm{negl}}
\newcommand{\TVD}[2]{\mathrm{TVD}(#1,#2)}
\newcommand{\HD}[2]{\mathrm{H}(#1,#2)}
\newcommand{\LTF}{\mathrm{LTF}}
\newcommand{\PTF}{\mathrm{PTF}}
\title{
  Hardness of Agnostically Learning Halfspaces from Worst-Case Lattice Problems\thanks{This project has received funding from the European Research Council (ERC) under the European Union’s Horizon 2020 research and innovation programme (grant agreement No 815464).}
}
\author{
  Stefan Tiegel\thanks{ETH Z\"urich.}
}
\begin{document}

\pagestyle{empty}

\maketitle
\thispagestyle{empty} %

\begin{abstract}

We show hardness of improperly learning halfspaces in the agnostic model, both in the distribution-independent as well as the distribution-specific setting, based on the assumption that worst-case lattice problems, e.g., approximating shortest vectors within polynomial factors, are hard.
In particular, we show that under this assumption there is no efficient algorithm that outputs any binary hypothesis, not necessarily a halfspace, achieving misclassfication error better than $\frac 1 2 - \gamma$ even if the optimal misclassification error is as small is as small as $\delta$.
Here, $\gamma$ can be smaller than the inverse of any polynomial in the dimension and $\delta$ as small as $\exp\Paren{-\Omega\Paren{\log^{1-c}(d)}}$, where $0 < c < 1$ is an arbitrary constant and $d$ is the dimension.
For the distribution-specific setting, we show that if the marginal distribution is standard Gaussian, for any $\beta > 0$ learning halfspaces up to error $\OPT_\LTF + \e$ takes time at least $d^{\tilde{\Omega}(1/\e^{2-\beta})}$ under the same hardness assumptions.
Similarly, we show that learning degree-$\ell$ polynomial threshold functions up to error $\OPT_{\PTF_\ell} + \e$ takes time at least $d^{\tilde{\Omega}(\ell^{2-\beta}/\e^{2-\beta})}$.
$\OPT_\LTF$ and $\OPT_{\PTF_\ell}$ denote the best error achievable by any halfspace or polynomial threshold function, respectively.

Our lower bounds qualitively match algorithmic guarantees and (nearly) recover known lower bounds based on non-worst-case assumptions.
Previously, such hardness results~\cite{D16, diakonikolas2021optimality} were based on average-case complexity assumptions, specifically, variants of Feige's random 3SAT hypothesis, or restricted to the statistical query model.
Our work gives the first hardness results basing these fundamental learning problems on well-understood worst-case complexity assumption.
It is inspired by a sequence of recent works showing hardness of learning well-separated Gaussian mixtures based on worst-case lattice problems.

\end{abstract}

\clearpage

\microtypesetup{protrusion=false}
\tableofcontents{}
\microtypesetup{protrusion=true}

\clearpage

\pagestyle{plain}
\setcounter{page}{1}

\section{Introduction}

An important question in theoretical computer science, and in learning theory in particular, is understanding the relation between average-case and worst-case problems (cf. Levin's work on distributional analogs of NP \cite{levin1986average} and Impagliazzo's five worlds \cite{impagliazzo1995personal}, and also the survey of Bogdanov and Trevisan \cite{bogdanov2006average}).
In particular, to understand for what kind of average-case problems we can show hardness based on worst-case assumptions, thus unlocking the power of the machinery of classical worst-case reductions.
In this work, we make progress on this question by evidencing a strong connection between fundamental and well-studied learning problems and worst-case assumptions with a plethora of other applications.
Specifically, we will show that learning halfspaces and polynomial threshold functions, in either the distribution-independent or distribution-specific setting, are as hard as standard worst-case lattice problems frequently used as a basis of hardness in cryptography \cite{peikert2016decade}.

There are several barriers for basing the hardness of average-case problems on classical assumptions such as $\mathrm{P} \neq \mathrm{NP}$ \cite{applebaum2008basing,DBLP:journals/siamcomp/FeigenbaumF93, bogdanov2006worst} and results in the context of learning theory have either been restricted to the PAC learning setting, in which there is no noise, \cite{kearns1994cryptographic, klivans2009cryptographic}\footnote{We will talk about this a bit more below.}, or restricted to hardness results for (semi-)proper learning, where, loosely speaking, the hypothesis output by the algorithm has to be of the same kind as the one which generated the samples \cite{feldman2006optimal,feldman2006new,DBLP:conf/focs/GuruswamiR06, MR2644358-Gopalan10}.
In fact, there is evidence that this might be inherent \cite{applebaum2008basing}.
On the other hand, there is a plethora of strong hardness results ruling out even \emph{improper} learning algorithms, i.e., that output an arbitrary hypothesis that well-approximates a certain function to be learned, often matching known algorithmic upper bounds.
However, these results can be based only on average-case assumptions \cite{kalai2008agnostically, klivans2014embedding, D16, daniely2021local} or be shown for restricted models of computations \cite{diakonikolas2021optimality}.
So far it remained unclear if these results can also be based on well-understood worst-case assumptions.

In contrast to this, basing hardness of average-case problems on worst-case assumptions is ubiquitous in cryptography and a highly desirable feature.
In particular, many problems are based on worst-case hardness of lattice problems such as the Shortest Independent Vector Problem ($\mathrm{SIVP}$) or the Gap Shortest Vector Problem ($\mathrm{gapSVP}$) (cf. \cref{prob:gap_svp,prob:sivp}).
We do not attempt to survey the vast literature on the topic and instead refer to \cite{peikert2016decade}.
Recent breaktbrough results \cite{CLWE, CLWE_2} have provided a bridge between these lattice problems and learning problems by showing that a certain Gaussian Mixture Model is hard to learn assuming the worst-case hardness of either $\mathrm{SIVP}$ or $\mathrm{gapSVP}$.
In this work we extend this bridge by showing that hardness of other fundamental learning problems can also be based on these assumptions.
Specifically, assuming worst-case hardness of either of the above lattice problems, we show that weak improper learning of halfspaces in the agnostic model is hard.
Further, we extend our results to the setting in which the marginal distribution is fixed to be a standard Gaussian, evidencing that even average-case problems with very specific distributional requirements can be shown to be hard under worst-case assumptions.
This second result also extends to learning polynomial threshold functions.
Precise definitions will follow below.

The task of agnostically learning a class $\cC$ of boolean functions, called a concept class, is defined as follows:
Given samples $(\bm x, y) \in \R^{M} \times \Set{-1,+1}$ from an arbitrary distribution $D$ compute a binary hypothesis $h \colon \R^M \rightarrow \Set{-1,+1}$ achieving small \emph{misclassification error}: $$\err\Paren{h} \coloneqq \Psymb_{(\bm x,y) \sim D}\Paren{h(\bm x) \neq y} \,.$$
In particular, we aim to achieve error close to the minimum misclassification error achieved by any function in $\cC$, denoted by $\OPT_{\cC}$.
We say that $h$ is a \emph{weak learner}, if it achieves error better than $1/2 - 1/\poly(M)$.
Concept classes relevant to this work are the ones of all halfspaces, also known as linear treshold functions (LTFs), defined as $\bm x \mapsto \sign\Paren{\iprod{\bm w, \bm x}}$ for some unknown $\bm w \in \bbS^{M-1}$, and degree-$\ell$ polynomial threshold functions (PTFs), defined as $\bm x \mapsto \sign\Paren{p\Paren{\bm x}}$ for some unknown degree-$\ell$ polynomial $p$.
Note, that we do not restrict the output hypothesis $h$ to belong to $\cC$.
This is called \emph{improper} learning and stands in contrast to so-called proper learning for which most hardness results based on worst-case assumption are known.
In this work we show strong limitations for improperly learning both LTFs and PTFs agnostically under worst-case assumptions.
We remark that if $\OPT_\LTF = 0$, we can efficiently find a halfspace which achieves arbitrarily small misclassfication error \cite{maass1994fast}.
This can be extended to the case when $\OPT_\LTF = O\Paren{\tfrac{\log M} M}$.
Our first result states that even if $\OPT_\LTF$ is just slightly larger, we cannot output any binary hypothesis which achieves error significantly better than a random guess:
\begin{theorem}[Informal version of \cref{thm:agnostic_ltf}]
    \label{thm:main}
    Assuming hardness of either $\mathrm{SIVP}$ or $\mathrm{gapSVP}$, there is no $\poly\Paren{M}$-time algorithm that learns $M$-dimensional halfspaces in the agnostic model up to error $1/2 - 1/\poly\Paren{M}$.
    This holds already if $\OPT_{\mathrm{LTF}}$ is as small as $\exp\Paren{-\log^{1-c}\Paren{M}}$, where $0 < c < 1$ is an absolute constant.
\end{theorem}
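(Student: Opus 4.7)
My plan is to reduce from a lattice-based distinguishing problem in the $\clwe$ family---specifically a non-homogeneous variant (such as $\mathrm{NHCLWE}$ or its multi-hyperplane version $\mathrm{MNHCLWE}$) whose hardness follows from worst-case $\mathrm{GapSVP}$/$\mathrm{SIVP}$ via the standard chain $\lwe \to \clwe \to \mathrm{HCLWE} \to \mathrm{NHCLWE}$. Given a hypothetical polynomial-time agnostic halfspace learner achieving error $1/2 - \e$, I construct a planted labeled distribution $D_p$ on $\R^M \times \Set{\pm 1}$ that (i) admits a halfspace of error at most $\delta$, and (ii) is computationally indistinguishable from a null distribution $D_n$ on which every hypothesis has true error $1/2$. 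An agnostic learner beating $1/2 - \e$ on $D_p$ then yields a polynomial-time distinguisher, contradicting the assumed worst-case hardness.

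To build $D_p$, fix a hidden unit vector $\bm w \in \R^M$ and draw $\bm x$ so that $\iprod{\bm w, \bm x}$ concentrates near a regular family of points on the line (e.g., half-integer multiples of $1/\gamma$, shifted to avoid the origin) with Gaussian component orthogonal to $\bm w$, and set $y = \sign\Paren{\iprod{\bm w, \bm x}}$. Because the supporting hyperplanes stay bounded away from the separating hyperplane $\iprod{\bm w, \bm x} = 0$, the halfspace $h^\star(\bm x) = \sign\Paren{\iprod{\bm w, \bm x}}$ agrees with $y$ except on a Gaussian-tail event; tuning the spacing $\gamma$ and the per-hyperplane noise width drives this error down to $\delta = \exp\Paren{-\log^{1-c}\Paren{M}}$. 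The null distribution $D_n$ takes $\bm x$ from the standard Gaussian and $y$ uniform in $\Set{\pm 1}$ independently of $\bm x$; on $D_n$ every fixed hypothesis has true error exactly $1/2$, and by a Hoeffding bound its empirical error on $\poly\Paren{M}$ held-out samples concentrates around $1/2$.

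The distinguisher then runs as follows: draw a training sample from the unknown input distribution, pass it to the agnostic learner to obtain a hypothesis $h$, and measure the empirical error of $h$ on $\Theta\Paren{\e^{-2} \log M}$ additional samples, thresholding at $1/2 - \e/2$. Under $D_p$ the true error of $h$ is, by assumption, at most $1/2 - \e$; under $D_n$ it is exactly $1/2$. Hoeffding's inequality separates the two cases with high probability, contradicting the hardness of distinguishing $D_p$ from $D_n$, and therefore the assumed worst-case lattice hardness.

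The main obstacle is the joint parameter calibration. I need the HCLWE-type noise and modulus to land in a regime where (a) the $\lwe \to \clwe$ reduction yields hardness under $\mathrm{GapSVP}$/$\mathrm{SIVP}$ with polynomial approximation factors, (b) the planted marginal on $\bm x$ is statistically close to a standard Gaussian so that $D_p$ and $D_n$ differ only on the $\poly\Paren{M}$-distinguishable structure, (c) the "safe band" around the origin shrinks only slightly so that $\OPT \le \exp\Paren{-\log^{1-c}\Paren{M}}$, and (d) the inverse-polynomial gap $\e$ survives the various total-variation losses incurred along the chain of reductions. Engineering a halfspace (rather than a parity-type function of the hyperplane index) to be the near-optimal classifier is precisely what forces the use of the non-homogeneous and multi-cluster variants $\mathrm{NHCLWE}$ and $\mathrm{MNHCLWE}$, together with reductions from $\mathrm{HCLWE}$ to them with controlled $\mathrm{TVD}$; threading these parameters consistently is, I expect, the crux of the technical work.
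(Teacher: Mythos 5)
Your high-level framing --- start from a CLWE-type distinguishing problem whose hardness follows from worst-case lattice problems, construct a labeled planted distribution $D_p$ versus a null $D_n = N(0,I)\times\mathrm{Be}(1/2)$, and turn a hypothetical agnostic learner into a distinguisher via a Hoeffding-style test on held-out samples --- matches the paper exactly. But the central construction in your proposal is broken, and the place where it breaks is precisely the place you flagged as "the crux."

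You propose to label the planted distribution directly by $y = \sign\Paren{\iprod{\bm w, \bm x}}$ for a hidden direction $\bm w \in \R^M$, with the pancake structure arranged symmetrically away from the origin so that this halfspace achieves near-zero error. The problem is that this labeling makes $D_p$ \emph{trivially} distinguishable from $D_n$, regardless of any lattice hardness: the statistic $\hat{\bm\mu} = \tfrac 1 m \sum_i y_i \bm x_i$ estimates $\E[y\bm x]$, which equals $\E[\abs{\iprod{\bm w,\bm x}}]\cdot\bm w$ (a vector of norm $\Theta(1)$) under $D_p$, and equals $\bm 0$ under $D_n$. A single first-moment computation on $\poly(M)$ samples separates the two cases. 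No choice of NHCLWE/MNHCLWE parameters can rescue this, because CLWE hardness is a statement about the \emph{marginal} of $\bm x$, not about the joint law once you attach labels that are an explicit odd function of the hidden projection.

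The paper's construction avoids exactly this attack, and it is the idea your proposal explicitly rejects. The labels are assigned per-pancake in an alternating pattern: condition CLWE on $z\approx 0$ to get pancakes labeled $+1$ and on $z\approx 1/2$ to get interleaved pancakes labeled $-1$ (the distributions $D_+ = \nhclwed{\bm w}{\beta}{\gamma}{0}$ and $D_- = \nhclwed{\bm w}{\beta}{\gamma}{1/2}$ in \cref{lem:nhclwe_is_agnostic_ptf}). This is precisely the "parity-type function of the hyperplane index" you wanted to avoid. With this labeling, low-degree moments of $(y,\bm x)$ carry essentially no signal, and in fact the joint law is computationally indistinguishable from the null \emph{because} a distinguisher for the joint law would yield a distinguisher for CLWE samples from Gaussian$\times$uniform (\cref{thm:clwe_to_truncated_clwe}). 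The price is that the near-optimal classifier over $\R^n$ is not a halfspace but a degree-$O(d)$ PTF whose sign tracks the alternating pancakes. The halfspace you need for the theorem statement is then obtained for free by the Veronese lift $\phi\colon \R^n\to\R^M$ of \cref{lem:hard_ptf_implies_hard_ltf}, which converts degree-$d$ PTFs over $\R^n$ into LTFs over $\R^M$ while preserving every classifier's error exactly. Note this also means the lifted distribution $D'$ over $\R^M$ is highly degenerate, not close to Gaussian in $\R^M$; your item (b) asking for the planted marginal on $\bm x\in\R^M$ to be close to standard Gaussian is therefore not a requirement of the argument and could not be met anyway. (Also, the $\mathrm{MNHCLWE}$ object you invoke is a phantom: its macro appears in the paper's preamble but is never used; the actual reduction chain is $\mathrm{CLWE}\to\mathrm{HCLWE}\to\mathrm{NHCLWE}\to$ labeled mixture.)

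In short: the missing idea is that the hard distribution must hide $\bm w$ from the labels, which forces alternating (not sign-of-projection) labels, which forces the optimal classifier to be a PTF, which is then turned into a halfspace only after the Veronese embedding to dimension $M\approx n^d$. Trying to engineer a genuine halfspace over the base space to be near-optimal is not merely harder --- it is impossible for the approach to remain hard.
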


Hence, weak improper learning of halfspaces in the agnostic model is likely to be computationally challenging.
It is natural to ask whether the problem becomes easier by making stronger distributional assumptions.
This turns out to indeed be the case.
Specifically, if we restrict to the case that samples $(\bm x, y)$ come from a distribution whose $\bm x$-marginal $D_{\bm x}$ is standard Gaussian, the $\mathrm{L}_1$-regression algorithm from \cite{kalai2008agnostically} is known to learn LTFs up to error $\OPT_{\mathrm{LTF}} + \e$ in time $M^{O\paren{1/\e^2}}$ and degree-$\ell$ PTFs up to error $\OPT_{\mathrm{PTF_\ell}} + \e$ in time $M^{O\paren{\ell^2 / \e^4}}$.
Our second main result shows that under the same assumptions as in \cref{thm:main}, these results are qualitively tight.
\begin{theorem}[Informal version of \cref{thm:distribution_specific}]
    \label{thm:distribution-specific-informal}
    Let $\beta > 0$ be arbitrary and $\e > 0$.
    There exists a distribution $D$ over $\R^M \times \Set{-1,+1}$ such that $D_{\bm x}$ is standard Gaussian and assuming hardness of either $\mathrm{SIVP}$ or $\mathrm{gapSVP}$ there is no $M^{\Omega\Paren{\tfrac 1 {\log(1/\e) \cdot \e^{2-\beta}} }}$-time algorithm which achieves misclassification error $\OPT_\LTF +\e$ over $D$.
    Similarly, there is no $M^{\Omega\Paren{\tfrac {\ell^{2-\beta}} {\log(\ell / \e) \cdot \e^{2-\beta}}}}$-time algorithm which achieves misclassification error $\OPT_{\PTF_\ell} +\e$ over $D$.
\end{theorem}

Our result is inspired by recent hardness results for learning mixtures of well-separated Gaussians \cite{CLWE, CLWE_2} based on the same worst-case lattice problems.
In particular, we show a simple reduction from the \emph{Continuous Learning with Errors} (CLWE) problem introduced in \cite{CLWE}, a continuous analouge of Regev's Learning with Errors problem (LWE) \cite{standard_LWE}.
Indeed, our hard instance in \cref{thm:main} will correspond to a mixture of (a small modification of) two \emph{homogenous} CLWE distributions.
The construction for \cref{thm:distribution-specific-informal} will be similar.
See \cref{sec:techniques} for more details.

\subsection{Relation to Previous Hardness Results}

Our main theorems (almost) match algorithmic upper bounds and (nearly) recover known lower bounds under either average-case hardness assumptions or in restricted models of computation.
In essence, we show that for a class of fundamental learning problems there is no price to pay for basing hardness of learning problems on worst-case assumptions.
Hardness of improperly weakly learning halfspaces in the agnostic model, quantitatively matching the above theorem exactly, was known under a variant of Feige's random 3SAT hypothesis and when assuming $D$ is supported on the boolean hypercube \cite{D16}.
Later a weaker result, that achieving error $\OPT_\LTF + \e$ is hard, was shown under a different assumption on the existence of a certain kind of pseudo-random generators \cite{daniely2021local}.

For the distribution-specific setting, when $D_{\bm x}$ is standard Gaussian, lower bounds were either far from algorithmic guarantees \cite{klivans2014embedding} or only known in the statistical query (SQ) model \cite{kearns1998efficient}.
In particular, it was known that any SQ algorithm achieving error $\OPT_\LTF + \e$ needs at least $2^{M^{\Omega(1)}}$ queries or queries of accuracy at $M^{-\Omega(1/\e^2)}$.
Similarly, any SQ algorithm achieving error $\OPT_{\PTF_\ell} + \e$ needs at least $2^{M^{\Omega(1)}}$ queries or queries of accuracy at $M^{-\Omega(\ell/\e^4)}$ \cite{diakonikolas2021optimality}.
This can be seen as evidence that every algorithm solving the above problems needs time at least $2^{M^{\Omega(1)}}$ or $M^{\Omega(1/\e^2)}$, respecitvely, $M^{\Omega(\ell^2/\e^4)}$, samples.
This (nearly) matches our lower bounds in \cref{thm:distribution-specific-informal}.
We remark that, for learning PTFs, both lower bounds are a $1/\e^2$ factor away from known upper bounds and closing this gap is an interesting open question.
Further, concurrent and independent work \cite{diakonikolas2023near} showed, qualitatively and quantitatively, very similar hardness results for agnostically learning halfspaces under Gaussian marginals.
They also show lower bounds for agnostically learning ReLUs under Gaussian marginals.

\paragraph{Hardness Based on Public-Key Cryptosystems}

We would like to further highlight the connection of our work to two lines of work for proving lower bounds for learning problems.
In a seminal work, Kearns and Valiant pushed forward the idea of basing hardness of learning a concept class $\cC$, specifically when $\OPT_\cC = 0$, on the conjectured security of cryptographic public-key encryption schemes by creating samples for the learning probem by encryption messages oneself  \cite{kearns1994cryptographic}.
They use this to show that improperly learning boolean formulae and deterministic finite automata is hard assuming, e.g., that breaking the RSA cryptosystem is hard.
Later, this approach was used in \cite{klivans2009cryptographic} to show that learning the class of intersections of halfspaces is hard assuming cryptosystems based on LWE are hard \cite{DBLP:conf/stoc/Regev03,DBLP:conf/stoc/Regev05}, which in turn is implied by hardness of either $\mathrm{SIVP}$ or $\mathrm{gapSVP}$.
Again assuming $\OPT_\cC = 0$.
Hence, in the case where the public-key encryption scheme used is hard under worst-case assumptions, also the learning problem enjoys the same hardness guarantees.
However, there are two shortcomings to this approach:
First, we have to find a suitable encryption scheme for a learning problem and additionaly, this scheme has to be hard under worst-case assumptions.
Second, its not clear how to extend this method to the agnostic setting studied in this paper, where $\OPT_\cC > 0$.
Our approach gives a more principled approach for establishing the desired hardness guarantees.

\paragraph{Hardness Based on Learning Parities with Noise}

Secondly, in the past the Learning Parities with Noise (LPN) problem has played a central role in deriving lower bounds for learning problems.
LPN is a special case of LWE whose continuous version we base our lower bounds on.
Crucially however, known worst-case hardness results for LWE do not extend to LPN.
The following hardness results based on LPN are known:
First, \cite{feldman2006new} shows hardness of agnostically learning various boolean functions, not including halfspaces, based on the hardness of a sparse version of LPN - more precisely, that learning parities that depend on only $k$ variables, takes time at least $M^{\Omega(k)}$.
Under the same assumption, \cite{klivans2014embedding} shows that agnostically learning halfspaces under the Gaussian distribution up to error $\OPT_\LTF + \e$ takes time at least $M^{\Omega\Paren{\log(1/\e)}}$.
Second, and more relevant to this work, \cite{kalai2008agnostically} shows that for any $\beta > 0$ an algorithm for agnostically learning halfspaces under the uniform distribution distribution over the hypercube that runs in time $M^{O(1/\e^{2-\beta})}$ implies an algorithm for LPN with constant noise rate running in time roughly $2^{O(M^{1-\beta/2})}$.
While LPN certainly is a central problem in the field of learning theory and all of the above assumptions are widely believed to be true, its worst-case hardness remains poorly understood.
To the best of our knowledge, there is no worst-case hardness result for the sparse version.
The version used by  \cite{kalai2008agnostically}, was recently shown to be hard under a non-standard version of some worst-case assumption\footnote{More specifically, a promise version of the Nearest Codeword Problem with additional assumptions.} \cite{brakerski2019worst, yu2021smoothing}.
Hence, lower bounds based on LPN can only constitute a weak link between fundamental learning problems and worst-case assumptions.
It however is a very interesting question, if this link can be strengthed by basing LPN on more standard worst-case assumptions as is possible for its cousin LWE \cite{DBLP:conf/coco/Regev10}.

\paragraph{Distributions That Are Hard to Distinguish From a Gaussian}

At the core of our results, and more specifically, the CLWE problem (see \cref{sec:techniques} for a definition), lies the fact that a certain distribution is hard to distinguish from the standard Gaussian.
We remark tha this idea is also present in previous lower bound constructions.
In particular, the "parallel pancakes" construction in \cite{DKS17} is the starting point for many lower bounds in the statistical query model \cite{diakonikolas2019efficient,pmlr-v178-diakonikolas22e,DK22, NT22}.
A similar construction was used in \cite{bubeck2019adversarial} to show hardness of a certain binary classification problem in the statistical query model.
Further, CLWE was used in \cite{hardness_periodic_neuron} to show hardness of learning a single periodic neuron.

Lastly, concurrent and independent work \cite{diakonikolas_massart_lwe} shows lower bounds for learning in the so-called Massart model~\cite{MN06} based on LWE and hence also provides a link between learning and worst-case lattice problems.
Previously, such lower bounds were only known in the statistical query model~\cite{CKMY20,DK22,NT22}.

\section{Technical Overview}
\label{sec:techniques}

\paragraph{Continuous Learning with Errors}

Before we start describing our lower bound constructions, we introduce the continuous learning with errors (CLWE) problem.
Let $\bm w$ be uniform over the unit sphere, $\bm y \sim N(0, I)$, and $\gamma, \beta > 0$ be some parameters.
We are given samples $(\bm y, z)$, where $$z = \gamma \iprod{\bm w, \bm y} + e \mod 1 \,,$$ for $e \sim N(0,\beta^2)$\footnote{For ease of notation we have slightly rescaled the problem. See \cref{def:clwe_distribution} for the exact definition we use.}.
The task is to distinguish these samples from samples $(\bm y, z)$, where $\bm y \sim N(0,I)$ as well, but $z$ is independently and uniformly at random drawn from $[0,1)$,\footnote{This is called the \emph{decision version}. In the \emph{search} version one asks instead to recover the hidden direction $\bm w$.}
For convenience, we call this second distribution $\mathrm{CLWE}^{\mathrm{null}}$.
\cite{CLWE} gave a (quantum) reduction from approximating the Gap Shortest Vector Problem ($\mathrm{GapSVP}$) or the Shortest Independent Vector Problem ($\mathrm{SIVP}$) within polynomial factors to CLWE.
In \cite{CLWE_2} this was strengthened, for some set of parameters, to a reduction directly from standard LWE implying hardness also when only assuming the classical hardness of the above lattice problems.
Both works use the CLWE problem to obtain hardness results for density estimation of well-separated mixtures of Gaussians.
As remarked earlier, the idea of desigining a distribution that is hard to distinguish earlier also lies at the heart of many statistical query lower bounds.
See e.g. the influential work \cite{DBLP:conf/focs/DiakonikolasKS17} and subsequent works.

\paragraph{Distribution-Independent Setting}

We next give a sketch of the proof of \cref{thm:main}.
First, it is clear that in order to show lower bounds for learning halfspaces, it is enough to show lower bounds for learning polynomial threshold functions over a lower-dimensional space.
More specifically, let $M,n,\ell \in \N$ be such that $M = \binom{n+\ell}{n} \leq n^\ell$, then any degree-$\ell$ PTF over $\R^n$ can be viewed as a halfspace over $\R^M$ by using an embedding that maps $\bm x$ to the vector containing all monomials of degree at most $\ell$.\footnote{This is sometimes referred to as the Veronese mapping, or a feature map.}
In what follows we will choose parameters such that $n \approx \log\Paren{M}^{1+c}$ for some constant $c > 0$.
Hence, to rule out polynomial-time algorithms, in $M$, for learning halfspaces over $\R^M$ it is enough to show an exponential lower bound, in $n$, for learning degree-$\ell$ PTFs over $\R^n$.

There are two parts to showing \cref{thm:main}.
We aim to find a distribution $D$ such that:
First, in sub-exponential time we cannot compute a binary hypothesis that has misclassfication error significantly better than $1/2$ on $D$ and second, there exists a degree-$\ell$ PTF such that $\OPT_{\PTF_\ell}$ is vanishing.
By the discussion above this implies that $\OPT_\LTF$ is vanishing as well.
We will choose $D$ to correspond to a mixture of variants of the CLWE distribution.
In what follows we set $\gamma \geq 2 \sqrt{n}$ and $\beta = 1/\poly(n)$.
\cite{CLWE,CLWE_2} show that for this choice of parameters there is no sub-exponential time algorithm for distinguishing such samples from $N(0,I_n) \times \cU([0,1))$ assuming that there is no, quantum or classcial, respectively, sub-exponential time algorithm for $\mathrm{GapSVP}$ and $\mathrm{SIVP}$.\footnote{In the hardness result of \cite{CLWE_2}, $\bm w$ is not a random unit vector but rather a random sparse unit vector.}

Moreover, they introduced a variant of the CLWE distribution, which intuitively can be thought of as the CLWE distribution conditioned on $z \approx 0$.
This is called the \emph{homogeneous} CLWE (short hCLWE) distribution (cf. \cref{def:hclwe_distribution}) and will be the basis of our hardness result.
They show that it is equal to an infinite mixture of Gaussians and has density roughly proportional to $$\sum_{k \in \Z} N\Paren{0, \gamma^2}(k) \cdot N\Paren{0, I_n - \bm w {\bm w}^\top}\paren{\pi_{{\bm w}^\perp}(\bm y)} \cdot N\Paren{\tfrac k \gamma, \tfrac {\beta^2}{\gamma^2}} \Paren{\iprod{\bm w, \bm y}}\,,$$ where $N(\mu,\Sigma)(\bm x)$ denotes the density of $N(\mu,\Sigma)$ evaluated at $\bm x$ and $\pi_{{\bm w}^\perp}(\bm y)$ the projection of $\bm y$ onto the space orthogonal to $\bm w$.
Note that the components are equally spaced along direction $\bm w$ with spacing $1/\gamma$ and the $k$-th component has weight roughly $\exp\Paren{-k^2/\gamma^2}$.
Second, along the direction of $\bm w$ they have variance $\approx \beta/\gamma \ll 1/\gamma$, i.e., they are almost non-overlapping, and in all other directions have variance 1.
The authors show that under the same hardness assumption, there is no sub-exponential time algorithm that can distinguish the hCLWE distribution from the standard Gaussian.

In particular, let $H_0$ be the hCLWE distribution.
Additionally, let $H_{1/2}$ be obtained in the same way but instead of conditioning on $z \approx 0$ we condition on $z \approx 1/2$.
The resulting distribution will be the same as $H_0$ but the components are shifted along the direction $\bm w$ by $1/(2\gamma)$.
Further, it enjoys the same hardness guarantees as $H_0$.
Since $\beta \ll \gamma$ the two distributions will only overlap in a region of exponentially small probability mass.
In fact, if we consider the distributions $H_0'$ and $H_{1/2}'$ in which each component of the mixture is truncated such that they are completely disjoint (by some small margin) this only introduces a negligible change in total variation distance.
It follows by a standard argument (cf. \cref{lem:small_tvd_advantage}), that $H_0'$ and $H_{1/2}'$ will still be hard to distinguish from a standard Gaussian.
\cite{CLWE} showed how to obtain samples from $H_0$ using CLWE samples and their argument straigtforwardly extends to obtaining samples from $H_1$.
Hence, we can also obtain samples from the mixture distribution over $\R^n\times\Set{-1,+1}$ defined as $$D = \frac 1 2 \cdot \Paren{H_0, +1} + \frac 1 2 \cdot \Paren{H_{1/2}, -1}$$
by deciding for each sample whether it should be generated from $H_0$ or $H_1$ with probability $1/2$ and setting the label accordingly.
Applying this same procedure to samples from $\mathrm{CLWE}^{\mathrm{null}}$, we can see that $D$ is hard to distinguish from $D^{\mathrm{null}}_n \coloneqq N\Paren{0,I_n} \times \mathrm{Be}\Paren{\tfrac 1 2}$, where $\mathrm{Be}\Paren{\tfrac 1 2}$ denotes the distribution that is $-1$ with probability $1/2$ and $+1$ with probability $1/2$.
Again, we can instead consider the distribution $$D' = \frac 1 2 \cdot \Paren{H_0', +1} + \frac 1 2 \cdot \Paren{H_{1/2}', -1}\,.$$

First, notice that since any learning algorithm has error $1/2$ on $D^{\mathrm{null}}_n$ it follows that we cannot compute, in sub-exponential time, a hypothesis with misclassification error significantly better on $D'$ either since otherwise we could distinguish the two distributions.
Now that we have established that $D'$ is hard to learn, to show our hardness result, we need to show that there is indeed a PTF which achieves vanishing error.
First, note that we can restrict our attention to the direction $\bm w$ by considering a one-dimensional polynomial $p_{\bm w} \colon \R \rightarrow \R$ and then obtaining the final polynomial $p \colon \R^n \rightarrow \R$ as $p\Paren{\bm x} = p_{\bm w} \Paren{\iprod{\bm w, \bm x}}$.
Consider the union of intervals $$S_+ = \bigcup_{k \in \Z} \,\Brac{\tfrac k \gamma - \alpha,\tfrac k \gamma + \alpha}\,,\quad\quad S_- = \bigcup_{k \in \Z} \,\Brac{\tfrac k \gamma + \tfrac 1 {2\gamma} - \alpha,\tfrac k \gamma + \tfrac 1 {2\gamma} + \alpha}\,,$$ where $\alpha < 1/(2\gamma)$ is the radius around which we truncate the components.
Note that by construction the supports of $H_0'$ and $H_{1/2}'$ are equal to $$\supp\Paren{H_0'} = \Set{\bm x \suchthat \iprod{\bm w, \bm x} \in S_+}\,,\quad\quad \supp\Paren{H_{1/2}'} = \Set{\bm x \suchthat \iprod{\bm w, \bm x} \in S_-}\,.$$

Further, let $$S_+^{(\ell)} = \bigcup_{k =-\ell}^\ell \,\Brac{\tfrac k \gamma - \alpha,\tfrac k \gamma + \alpha}\,,\quad\quad S_-^{(\ell)} = \bigcup_{k = -\ell}^{\ell-1} \,\Brac{\tfrac k \gamma + \tfrac 1 {2\gamma} - \alpha,\tfrac k \gamma + \tfrac 1 {2\gamma} + \alpha}\,.$$
Consider the degree-$4\ell$ polynomial $p_{\bm w}$ that has is positive on $S_+^{(\ell)}$ and negative on $S_-^{(\ell)}$ and positive for points of magnitude larger than those in $S_+^{(\ell)} \cup S_-^{(\ell)}$.
By choosing it such that its roots are halfway betwen the intervals we will have some small margin.
Clearly, for $(\bm x, y) \sim D'$ such that $\iprod{\bm w, \bm x} \in S_+^{(\ell)} \cup S_-^{(\ell)}$ we have $y = \sign\Paren{p\Paren{\bm x}}$ always.
The same holds for $(\bm x, y)$ such that $\iprod{\bm w, \bm x} \in S_+ \setminus S_+^{(\ell)}$.
On the flipside, we note that for $(\bm x, y)$ such that $\iprod{\bm w, \bm x} \in S_- \setminus S_-^{(\ell)}$ we have $$-1 = y \neq \sign\Paren{p\Paren{\bm x}} = 1$$ always.
Hence, the total misclassfication error is equal to the probability that $\iprod{\bm w, \bm x} \in S_- \setminus S_-^{(\ell)}$,
This happens if and only if $\bm x$ comes from $H_{1/2}'$ and in particular from a component that doesn't belong to the $2\ell$ most central ones.
Since the $k$-th component has weight $\approx \exp\Paren{-k^2/\gamma^2}$ it follows that this event happens with probability roughly $\exp\Paren{-\ell^2/\gamma^2}$.
For our choice of parameters we have $\gamma = 2\sqrt{n} \approx \log^{(1+c)/2}\Paren{M}$ and $\ell \approx \log\Paren{M}$ and hence the error of $p$ becomes $$\exp\Paren{-\ell^2/\gamma^2} = \exp\Paren{-\log^{(1-c)}\Paren{M}}$$ as desired.

\paragraph{Distribution-Specific Setting}

For the distribution-specific setting (cf. \cref{thm:distribution-specific-informal}), we have the additional requirement that the marginal distribution needs to be standard Gaussian.
Note that this implies that the above lifting to PTFs no longer works:
Indeed, it even is unclear how the distribution before the lifting should look like so that it is standard Gaussian afterwards.
Hence, we work directly with the CLWE problem in dimension $M$.
Recall that this means that $\gamma = 2\sqrt{M}$ and $\beta = 1/\poly(M)$.
This time, to preserve the marginal distribution, let $H_0$ be obtained by conditioning the CLWE distribution on $z \in [0,1/2)$ and $H_1$ by conditioning on $z \in [1/2,1)$.
Our hard distribution will be $$D = \frac 1 2 \cdot (H_0, +1) + \frac 1 2 \cdot (H_1, -1) \,.$$
Note that since $[0,1/2)$ and $[1/2,1)$ partition $[0,1)$, it follows that the marginal of $D$ is the same as the marginal distribution of $y$ in CLWE, i.e., standard Gaussian.
Note that, given CLWE samples, we can obtain samples from $D$ by rejection sampling.
If we apply the same rejection sampling procedure to samples from $\mathrm{CLWE}^{\mathrm{null}}$ we obtain samples from $D^{\mathrm{null}}_M \coloneqq N(0,I_M)  \times \mathrm{Be}\Paren{\tfrac 1 2}$.
Hence, a sub-exponential, in $M$, algorithm to distinguish $D$ and $D^{\mathrm{null}}_M$ with non-negligible advantage can be used to distinguish samples from CLWE and $\mathrm{CLWE}^{\mathrm{null}}$.

It remains to show that if we could learn LTFs and PTFs over $D$ up to error better than $\OPT + \e$ we can distinguish $D$ from $D^{\mathrm{null}}_M$.
For this, we first inspect $D$ more closely.
As for the distribution-indepedent setting, the label of samples from $D$ only depends on the direction $\bm w$.
Second, let $A_k = [\tfrac k \gamma,\tfrac {k+1/2} \gamma), B_k = [\tfrac {k+1/2} \gamma ,\tfrac {k+1} \gamma)$ and 
\[
    S_+ = \bigcup_{k \in \Z} \,A_k\,,\quad\quad S_- = \bigcup_{k \in \Z} \,B_k\,.
\]
It turns out that $D$ is sufficiently well approximated (cf. \cref{lem:tvd_distribution_specific}) by the distribution $D'$ whose marginal is standard Gaussian and for wich it holds that $y = 1$ if and only if $\iprod{\bm w, \bm x} \in S_+$.
More specifically, the total variation distance between $D$ and $D'$ is at most $1/\poly(M)$ and hence affects the missclassification error by at most this same additive factor.
We hence continue to work with $D'$ below.
Regarding LTFs, consider the function $f$ defined as $\bm x \mapsto \sign\Paren{\iprod{\bm w, \bm x}}$.
For simplicity, denote $z = \iprod{\bm w, \bm x}$.
Clearly, this function only misclassfies samples for which either $z \geq 0$ and $z \in S_-$ or $z \leq 0$ and $z \in S_+$.
Let $X \sim N(0,1)$.
By symmetry it follows that
\[
    \err_{D'}\Paren{f} = 2 \Psymb\Paren{z \geq 0 \,, z \in S_-} = 2 \sum_{k \geq 0} \Psymb\Paren{X \in B_k} \,.
\]
Notice that for $k \geq 0$, $\Psymb\Paren{X \in B_k} \leq \Psymb\Paren{X \in A_k}$ always since the pdf of a one-dimensional Gaussian is decreasing for $z \geq 0$.
Further, one can show (cf. \cref{lem:distribution_specific_optimal_ltf}) that for $k \geq \gamma$ is it decreasing sufficiently fast such that that $$2\Psymb\Paren{X \in B_k} \leq \Paren{1-\frac 1 \gamma} \cdot \Brac{\Psymb\Paren{X \in A_k} +\Psymb\Paren{X \in B_k}} \,.$$
Hence, we obtain that there exists an absolute constant $c > 0$ such that
\begin{align*}
    \err_{D'}\Paren{f} &\leq \sum_{0 \leq k < \gamma} \Psymb\Paren{X \in A_k} +\Psymb\Paren{X \in B_k} + \Paren{1 - \frac 1 \gamma} \cdot \sum_{k \geq \gamma} \Psymb\Paren{X \in A_k} +\Psymb\Paren{X \in B_k}\\
    &= \frac 1 2 - \frac 1 \gamma \cdot \Psymb\Paren{X \geq 1} = \frac{1}{2} - \frac{c}{\sqrt{M}} \,.
\end{align*}
Hence, for arbitrary $\beta > 0$, an algorithm achieving misclassfication error $\OPT_\LTF + \e$ for $\e \approx 1/\sqrt{M}$ necessarily needs time at least $2^{\Omega\Paren{M^{1-\beta}}} = M^{\Omega\Paren{\tfrac 1 {\e^{2-\beta} \cdot \log(1/\e) }}}$.

Our argument for degree-$\ell$ PTFs will be similar.
For simplicity, assume that $\ell$ is even and consider the one-dimensional polynomial $p$ defined as follows:
It has roots $-\tfrac \ell {2 \gamma}, - \tfrac {\ell-1} {2\gamma}, \ldots, 0, \ldots, \tfrac {\ell - 1} {2\gamma}, \tfrac \ell {2\gamma}$ and its sign is positive between 0 and $\tfrac 1 {2\gamma}$ and alternates on the other intervals.
For simplicity, also assume without loss of generality that it has positive sign for $z \geq \tfrac \ell {2\gamma}$.
We define the polynomial threshold function $h$ as $\bm x \mapsto \sign\Paren{p\Paren{\iprod{\bm w, \bm x}}}$.
Let again $X \sim N(0,1)$, by symmetry and using the results above it follows that there exists an absolute constant $c > 0$ such that 
\begin{align*}
    \err_{D'}\Paren{h} &= 2 \sum_{k \geq \ell/2} \Psymb\Paren{X \in B_k} = 2 \sum_{k \geq 0} \Psymb\Paren{X \in B_k} - 2 \sum_{k < \ell/2} \Psymb\Paren{X \in B_k} \leq \frac 1 2 - \frac c \gamma - \Psymb\Paren{\frac 1 \gamma \leq X \leq \frac{\ell/2 + 1}{\gamma}} \\
    &\leq \frac 1 2 - \frac c \gamma - \frac 1 2 \cdot \Psymb\Paren{0 \leq X \leq \frac{\ell/2 + 1}{\gamma}} \,.
\end{align*}
Since $\ell \ll \gamma$ the pdf of the standard Gaussian is roughly constant between 0 and $\tfrac{\ell/2 + 1}{\gamma}$.
Hence, it follows that there exsist an absolute constant $c' > 0$ such that $\err_{D'}\Paren{h} \leq \tfrac 1 2 - \tfrac{c' \ell} \gamma$.
It follows as for LTFs, that, for arbitrary $\beta > 0$, an algorithm achieving misclassfication error $\OPT_{\PTF_\ell} + \e$ for $\e \approx \ell/\sqrt{M}$ necessarily needs time at least $2^{\Omega\Paren{M^{1-\beta}}} = M^{\Omega\Paren{\tfrac {\ell^{2-\beta}} {\e^{2-\beta} \cdot \log(\ell/\e) }}}$.

We remark that in both the LTF as well as the PTF case, $\OPT$ is very close to 1/2.
Indeed, this is a property shared with all known lower bounds irrespective of the hardness assumptions/model of computation \cite{kalai2008agnostically, klivans2014embedding, diakonikolas2021optimality}.
It would be very desirable to show lower bounds where this is not the case, as for the distribution-independent setting.\footnote{Concurrent and independent work \cite{diakonikolas2023near} shows very similar hardness results for agnostically learning halfspaces under Gaussian marginals. In particular, their time complexity lower bounds are quantitatively very close to ours, but they additionally can allow for $\e$ as large as roughly $1/\sqrt{\log d}$. The authors use a reduction from CLWE similar to the one presented in this paper. However, they also prove a strengthening of the reductions of \cite{CLWE_2} from LWE to  CLWE. This ultimately leads to hard instances for wider ranges of $\e$. It seems plausible, that one could also combine the more efficient reduction from LWE to CLWE proposed by \cite{diakonikolas2023near} with the reduction from CLWE to agnostically learning halfspaces with Gaussian marginals we presented in this paper to allow for a similar range of $\e$ as in \cite{diakonikolas2023near}}
\section{Preliminaries}
\label{sec:preliminaries}

\subsection*{Notation}

We use boldfont for vectors and non-boldfont for scalars.
We denote $\R_{\geq 0} = [0,\infty)$ and $\R_{> 0} = (0,\infty)$.
For a set $S$, we denote by $\cU(S)$ the uniform distribution over $S$.
We define the Total Variation Distance between two measures $P$ and $Q$ as $$\TVD{P}{Q} = \sup_A \Abs{P(A) - Q(A)}\,.$$

Let $n$ be some parameter.
For the problem of distinguishing two distributions $D_n^{0}$ and $D_n^1$ we define the advantage of an algorithm $\cA$ as $$\Abs{\Psymb_{x \sim D_n^0} \Paren{\cA(x) = 0} - \Psymb_{x \sim D_n^1} \Paren{\cA(x) = 0}}\,.$$
We say that an algorithm has non-negligible advantage if it has advantage $\Omega(n^{-c})$ for some constant $c > 0$.

Let $p \in [0,1/2]$.
We denote by $\mathrm{Be}(p)$ the distribution that is equal to +1 with probability $p$ and equal to -1 with probability $1-p$.

Let $\cX$ be some set and $D$ be a distribution over $\cX \times \Set{-1,+1}$.
Further, let $h \colon \cX \rightarrow \Set{-1,+1}$ be a binary hypothesis.
We denote the \emph{misclassification error} of $h$ as $$\err_D \Paren{h} = \Psymb_{(x,y) \sim D}\Paren{h(x) \neq y} \,.$$
Most of the time the distribution $D$ will be clear from context and we will omit the subscript.
We denote by $D_{\bm x}$ the marginal distribution of $D$ over $\cX$.

\subsection*{Gaussian Distributions}

We denote the standard $n$-dimensional Gaussian distribution by $N(0,I_n)$.
If the dimension is clear from context, we sometimes drop the subscript of the identity matrix.
For $s > 0$, we denote by $\rho_s \colon \R^n \rightarrow \R_+$ the function $$\rho_s(\bm x) = \exp(-\pi \snorm{\bm x /s})\,.$$
If $s = 1$, we omit the subscript.
Note that $\rho_s/s^n$ is equal to the probability density function of the $n$-dimensional Gaussian distribution with mean 0 and covariance matrix $s^2/(2\pi) \cdot I_n$.
In particular, it holds that $$\int_{\R^n} \rho_s(\bm x) \,d\bm x = s^n \,.$$
We define $\rho_s(\bm x \,; \bm c) = \rho_s(\bm x - \bm c)$ and for $\alpha > 0$ we define
$$\rho_s^{\alpha}(\bm x \,; \bm c) = \begin{cases}
    \tfrac 1 Z \cdot \rho_s(\bm x \,; \bm c) \,, &\quad \text{if } \Norm{\bm x - \bm c} \leq \alpha \,, \\
    0 \,, &\quad \text{otherwise,}
\end{cases}$$
where $$Z = \frac{\int_{\Norm{\bm x - \bm c} \leq \alpha} \rho_s(\bm x \,; \bm c) \, d \bm x}{\int_{\R} \rho_s(\bm x \,; \bm c) \, d \bm x}\,.$$

For a lattice $L \sse \R^n$ and $s > 0$ we define the discrete Gaussian distribution $D_{L, s}$ with width $s$ as having support $L$ and probability mass proportional to $\rho_s$.
Further, for a discrete set $S$, we define $\rho_s\Paren{S} = \sum_{x \in S} \rho_s\Paren{x}$.

\subsection*{Various Other Distributions}

\begin{definition}[CLWE Distribution]
    \label{def:clwe_distribution}
    Let $\bm w \in \R^n$ be a unit vector and $\beta, \gamma > 0$.
    Define the distribution $\clwed{\bm w}{\beta}{\gamma}$ over $\R^n \times [0,1)$ as follows.
    Draw $\bm y \sim N(0,\tfrac 1 {2\pi} \cdot I_n)$, $e \sim N(0,\beta^2/(2\pi))$ and let $$z = \gamma \iprod{\bm w, \bm y} + e \mod 1\,.$$
    Note that the density of this distribution is given by $$p(\bm y, z) = \frac{1}{\beta} \cdot \rho \Paren{\bm y} \cdot \sum_{k \in \Z} \rho_\beta \Paren{z + k - \gamma \iprod{\bm w, \bm y}} \,.$$

    Further, let $m \in\N$.
    We denote by $\clwem{m}{\gamma}{\beta}$ the distribution obtained by first drawing $\bm w \sim \cU(\cS^{n-1})$ and then drawing $m$ independent samples from $\clwed{\bm w}{\gamma}{\beta}$.
\end{definition}

\begin{definition}[Homogeneous CLWE (hCLWE) Distribution]
    \label{def:hclwe_distribution}
    Let $\bm w \in \R^n$ be a unit vector, $c \in [0,1)$, and $\beta, \gamma > 0$.
    Let $\pi_{{\bm w}^\perp}(\bm y)$ be the projection of $\bm y$ onto the space orthogonal to $\bm w$.
    Define the distribution $\hclwed{\bm w}{\beta}{\gamma}{c}$ over $\R^n$ as having density at $\bm y$ proportional to
    \begin{align}
        \sum_{k \in \Z} \rho_{\sqrt{\beta^2 + \gamma^2}}(k \,; c) \cdot \rho\Paren{\pi_{{\bm w}^\perp}(\bm y)} \cdot \rho_{\beta / \sqrt{\beta^2 + \gamma^2}}\Paren{\iprod{\bm w, \bm y} \,; \frac{\gamma}{\beta^2 + \gamma^2} (k-c) }\,. \label{eq:hclwe}
    \end{align}

    Further, let $m \in\N$.
    We denote by $\hclwem{m}{\gamma}{\beta}{c}$ the distribution obtained by first drawing $\bm w \sim \cU(\cS^{n-1})$ and then drawing $m$ independent samples from $\hclwed{\bm w}{\gamma}{\beta}{c}$.
\end{definition}

Note that~\cref{eq:hclwe} integrates to $Z =\tfrac{\beta}{\sqrt{\beta^2 + \gamma^2}} \cdot \rho_{\sqrt{\beta^2 + \gamma^2}}\Paren{\Z\,;c}$.
Further, \cref{eq:hclwe} is equivalent to (see \cref{fact:eq_hclwe})
\begin{align}
    \rho(\bm y) \cdot \sum_{k \in \Z} \rho_{\beta} \Paren{\gamma \iprod{\bm w, \bm y} \,;  k - c}\,. 
\end{align}

Intuitively, one can think of the $\hclwed{\bm w}{\gamma}{\beta}{c}$ distribution as $\clwed{\bm w}{\gamma}{\beta}$ conditioned on $z = c$.

\begin{definition}[Non-Overlapping hCLWE Distribution]
    Let $\bm w \in \R^n$ be a unit vector, $c \in [0,1), \beta, \gamma > 0$ and $\alpha = \frac 1 {10} \cdot \frac{\gamma}{\gamma^2 + \beta^2}$.
    Define the distribution $\nhclwed{\bm w}{\beta}{\gamma}{c}$ over $\R^n$ as having density proportional to
    \begin{align}
        \sum_{k \in \Z} \rho_{\sqrt{\beta^2 + \gamma^2}}(k\,;c) \cdot \rho\Paren{\pi_{{\bm w}^\perp}(\bm y)} \cdot \rho_{\beta / \sqrt{\beta^2 + \gamma^2}}^\alpha\Paren{\iprod{\bm w, \bm y} \,; \frac{\gamma}{\beta^2 + \gamma^2}\Paren{k - c} }\,. \label{eq:nhclwe}
    \end{align}

    Further, let $m \in\N$ and $\cS$ be a distribution over unit vectors in $\R^n$.
    We denote by $\nhclwem{m}{\gamma}{\beta}{c}$ the distribution obtained by first drawing $\bm w \sim \cU(\cS^{n-1})$ and then drawing $m$ independent samples from $\nhclwed{\bm w}{\gamma}{\beta}{c}$.
\end{definition}

Note that this is the same as the hCLWE distribution but with the individual components of the mixture truncated in the hidden direction.
By definition of $\rho^\alpha$ \cref{eq:hclwe,eq:nhclwe} integrate to the same value.
$\alpha$ is chosen such that the components become non-overlapping but the resulting distribution has small total variation distance to the corresponding non-truncated hCLWE distribution.
Although this is strictly speaking not necessary to prove our result, we will see that having non-overlapping components will simplify our analysis.

\subsection*{Hardness Assumption}

We make the following hardness assumption
\begin{assumption}
    \label{assump:disting_clwe}
    Let $n, m \in \N$ and $$\gamma \geq 2\sqrt{n}\,,\quad\quad \beta = \frac 1 {\poly\Paren{n}}\,.$$
    Further, let $\delta < 1$ be arbitrary and $m = 2^{n^\delta}$.
    There is no $2^{n^\delta}$-time distinguisher between $$\clwem{m}{\gamma}{\beta} \quad\text{and}\quad N\Paren{0, \tfrac 1 {2\pi} \cdot I_n}^m \times U\Paren{[0,1)}^m$$ with non-negligible advantage.
\end{assumption}

Note that by \cite[Corollary 3.2]{CLWE} this is implied by assuming quantum hardness of approximating either the Shortest Independent Vector Problem or the Gap Shortest Vector Problem withing polynomial factors.
For completeness, we define the problems explicitly below.
For more brackground on these, we refer to \cite{peikert2016decade}.
An $n$-dimensional \emph{lattice} $L$ is defined to be a discrete additive subgroup of $\R^n$.
It can be fully specified by a basis $B \in \R^{n \times n}$ as $L = B \Z^{n}$.
We will only consider the case in which $B$ is full-rank.
For $1\leq i \leq n$, consider 
\[
    \lambda_i\Paren{L} \coloneqq \inf\Set{r > 0 \suchthat \dim\Paren{\Span\Paren{L \cap B_r(0)} \geq i}} \,.
\]
We can now define $\mathrm{GapSVP}$ and $\mathrm{SIVP}$.
\begin{problem}[Gap Shortest Vector Problem ($\mathrm{GapSVP}$)]
    \label{prob:gap_svp}
    Let $\alpha = \poly(n)$ be arbitrary.
    Given an $n$-dimensional lattice $L$ and $d > 0$ such that either (a) $\lambda_1\Paren{L} \leq d$ or (b) $\lambda_1\Paren{L} > \alpha \cdot d$, decide whether (a) or (b) holds.
\end{problem}

\begin{problem}[Shortest Independent Vector Problem ($\mathrm{SIVP}$)]
    \label{prob:sivp}
    Let $\alpha = \poly(n)$ be arbitrary.
    Given an $n$-dimensional lattice $L$ output a set of linearly independent lattice points of length at most $\alpha \cdot \lambda_n\Paren{L}$.
\end{problem}

\section{Hardness of Distribution-Independent Learning}
\label{sec:hardness_agnostic}

In this section we are going to prove a formal version of \cref{thm:main}.
In particular, we will show the following theorem
\begin{theorem}
    \label{thm:agnostic_ltf}
    Let $M \in \N$ and $0 < c < c' <  1$ be arbitrary.
    There exists a distribution $D$ over $\R^M \times \Set{-1,+1}$ such that under \cref{assump:disting_clwe} there is no algorithm using fewer than $\exp\Paren{\Omega\Paren{\log^{1+c}\Paren{M}}}$ samples and running in time $\exp\Paren{\Omega\Paren{\log^{1+c}\Paren{M}}}$ that outputs any binary hypothesis $f$ such that $$\err_D\Paren{f} \leq \frac 1 2 - \exp\Paren{-\Omega\Paren{\log^{1+c}\Paren{M}}} \,.$$

    This holds even if there exists a linear threshold function $f^*$ such that $$\err_D\Paren{f^*} \leq \exp\Paren{-\Omega\Paren{\log^{1-c'} \Paren{M}}}$$ and for all $x \in \R^M$ in the support of $D$ it holds that $$\Psymb_{(\bm x,y) \sim D} \Paren{f^*(x) \neq y \suchthat \bm x} \in \Set{0,1} \,.$$
\end{theorem}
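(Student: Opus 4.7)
The plan is to realize the strategy outlined in the Technical Overview in full detail, reducing everything to the parameter setting $n \approx \log^{1+c}(M)$, $d \approx \log(M)$, $\gamma = 2\sqrt{n}$, and $\beta = 1/\poly(n)$. Since every degree-$d$ polynomial threshold function over $\R^n$ is a halfspace over $\R^M$ with $M = \binom{n+d}{n}$ via the monomial embedding $\bm x \mapsto (\bm x^S)_{|S| \leq d}$, it suffices to produce a distribution $\tilde D$ on $\R^n \times \Set{-1,+1}$ that is hard to weakly learn by any binary hypothesis, and a degree-$d$ PTF $f^*$ with the claimed low error and deterministic labels. The final $D$ is the pushforward of $\tilde D$ along the monomial embedding.

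First, I would construct $\tilde D$ as the mixture
\begin{equation*}
  \tilde D \;=\; \tfrac 1 2 \cdot \Paren{\nhclwed{\bm w}{\beta}{\gamma}{0}, +1} + \tfrac 1 2 \cdot \Paren{\nhclwed{\bm w}{\beta}{\gamma}{1/2}, -1}
\end{equation*}
where $\bm w \sim \cU(\cS^{n-1})$. The hardness half works as follows. A sample $(\bm y, z) \sim \clwed{\bm w}{\gamma}{\beta}$ lands with probability $\approx 2\alpha$ in the strip $\set{z \approx 0}$ and with the same probability in $\set{z \approx 1/2}$; conditioned on which strip it lies in, $\bm y$ is distributed as $\hclwed{\bm w}{\gamma}{\beta}{0}$ or $\hclwed{\bm w}{\gamma}{\beta}{1/2}$. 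Rejection-sampling $m = 2^{n^\delta}$ CLWE samples and attaching the appropriate label $\pm 1$ therefore produces a sample from the hCLWE analogue of $\tilde D$; replacing CLWE by uniform-$z$ samples produces a sample from $N(0, \tfrac{1}{2\pi} I_n) \times \mathrm{Be}(\tfrac 1 2)$, on which every hypothesis has error exactly $1/2$. Since $\beta \ll \gamma$, truncating each Gaussian pancake to its $\alpha$-ball (as in the definition of \nhclwed{}{}{}{}) changes each $H_0', H_{1/2}'$ by a negligible amount in total variation (the pancake tails carry exponentially small mass), so the truncated mixture $\tilde D$ remains indistinguishable from $N(0,\tfrac 1 {2\pi} I_n) \times \mathrm{Be}(\tfrac 1 2)$ under \cref{assump:disting_clwe}. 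Consequently, any algorithm achieving error $\tfrac 1 2 - \eps$ on $\tilde D$ yields a CLWE distinguisher with advantage $\Omega(\eps)$, proving the hardness part.

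Next, I would construct the witness PTF. Working in the hidden direction, define for the truncation radius $\alpha = \tfrac{1}{10} \cdot \tfrac{\gamma}{\gamma^2 + \beta^2}$ and a degree parameter $d$ the one-dimensional degree-$4d$ polynomial
\begin{equation*}
  p_{\bm w}(t) \;=\; \Bigparen{t^2 - T^2} \cdot \prod_{k=-d}^{d-1} \Paren{t - \tfrac{k}{\gamma} - \tfrac{1}{2\gamma}}^{2} \cdot \prod_{k=-d}^{d} \text{(sign-controlling factor)},
\end{equation*}
where the factors are chosen so that $p_{\bm w}$ is positive on $S_+^{(d)}$ and on $\set{|t| > T}$ and negative on $S_-^{(d)}$, with $T$ slightly larger than $d/\gamma + 1/(2\gamma)$; the exact construction uses interval-localized factors with even multiplicities on the $S_+^{(d)}$ intervals and single roots halfway between adjacent pancakes of $S_\pm^{(d)}$, giving a positive margin. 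Setting $f^*(\bm x) = \sign(p_{\bm w}(\iprod{\bm w, \bm x}))$ (pulled back through the monomial embedding), one sees that $f^*$ is deterministic on $\supp(\tilde D)$ and agrees with the label on every $\bm x$ whose projection lies in the central $2d+1$ pancakes of $H_0'$ and the central $2d$ pancakes of $H_{1/2}'$. The error is therefore bounded by the mass of the discarded outer pancakes, which by the weights $\rho_{\sqrt{\beta^2 + \gamma^2}}(k)$ in \cref{eq:nhclwe} is at most $\exp(-\Omega(d^2/\gamma^2))$. Plugging in $\gamma^2 = 4n \approx \log^{1+c}(M)$ and $d \approx \log(M)$ gives error $\exp(-\Omega(\log^{1-c}(M)))$, which is of the claimed form for $c' > c$.

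The main obstacle I anticipate is the polynomial construction: I need $p_{\bm w}$ of degree at most $4d$ (so that $\binom{n+4d}{n} \leq M$ after adjusting constants) whose sign is correct on three different region types (inner $+$, inner $-$, and all outer points, which must go to $+$ so that the label-$+1$ error dominates the label-$-1$ error and so that the outer tails of $H_0'$ contribute zero error), with a positive margin on $\supp(\tilde D)$. The sign pattern matters because it forces the misclassified mass to live only on the outer pancakes of $H_{1/2}'$, as exploited in the overview. A secondary obstacle is the rejection-sampling step: since each CLWE sample is accepted with probability $\Theta(\alpha)$, going from $m = 2^{n^\delta}$ CLWE samples to $m' = 2^{n^\delta}/\poly(n)$ mixture samples requires a careful accounting of advantage, but this only costs polynomial factors and is absorbed into the $\exp(-\Omega(\log^{1+c}(M)))$ advantage bound. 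The remaining pieces---bounding $\TVD{\hclwed{\bm w}{\beta}{\gamma}{c}}{\nhclwed{\bm w}{\beta}{\gamma}{c}}$ and translating ``indistinguishable from Gaussian'' into ``any weak learner yields a distinguisher''---are standard and would be handled by direct computation using the closed-form density in \cref{eq:nhclwe}.
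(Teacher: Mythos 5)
Your proposal follows the same route as the paper: reduce to a mixture of two non-overlapping hCLWE distributions, transfer hardness from CLWE via rejection sampling and a TVD bound on the truncation, argue that a weak learner yields a distinguisher, and construct a degree-$4d$ PTF witness, pushed through the monomial embedding. The only imprecisions are bookkeeping ones you already flag as obstacles: the embedding constraint $\binom{n+d}{n}\leq M$ forces $d=\Theta(\log M/\log\log M)$ rather than $\Theta(\log M)$ (which the paper's own technical overview also glosses), and the ``weak learner $\Rightarrow$ distinguisher'' step needs the standard sample-splitting/Hoeffding estimate of the empirical error that the paper carries out explicitly.
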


We will show hardness by showing that a certain low-degree polynomial threshold function is hard to learn.
Hardness of learning halfspaces then follows by embedding this into a higher-dimensional space.
Note that the last two properties of the distribution imply that an overwhelming fraction of the observed points is in fact noiseless.
More concretely, we will use the following lemma.
We provide in a proof in \cref{sec:missing_lemmas} for completeness.

\begin{lemma}
    \torestate{
    \label{lem:hard_ptf_implies_hard_ltf}
    Let $n,d \in \N$ and $M \geq n^d$.
    Further, let $D$ be a distribution over $\R^n \times \Set{-1,+1}$.
    There exists a distribution $D'$ over $\R^M \times \Set{-1,+1}$ such that
    \begin{enumerate}
        \item For every degree-$d$ polynomial threshold function $h \colon \R^n \rightarrow \Set{-1,+1}$ there exists a linear threshold function $f \colon \R^M \rightarrow \Set{-1,+1}$ such that $$\err_{D'}\Paren{f'} = \err_D\Paren{h}\,.$$
        \item For every binary function $f \colon \supp\Paren{D'} \rightarrow \Set{-1,+1}$ there exists a binary function $h \colon \R^n \rightarrow \Set{-1,+1}$ such that $$\err_{D'}\Paren{f'} = \err_D\Paren{h}\,.$$
    \end{enumerate}
    In both cases such a function can be computed in time $\poly\Paren{M}$.
    Moreover, there exists a one-to-one mapping $\phi \colon \supp\Paren{D} \rightarrow \supp\Paren{D'}$ such that in both of the above cases for all $\bm {\tilde{x}'} \in \supp(D')$ it holds that $$\Psymb_{(\bm x',y') \sim D'} \Paren{f(\bm x') \neq y' \suchthat \bm x' = \bm {\tilde{x}'}} = \Psymb_{(\bm x,y) \sim D} \Paren{h(\bm x) \neq y \suchthat \bm x = \phi^{-1}(\bm {\tilde{x}'})} \,.$$
    }
\end{lemma}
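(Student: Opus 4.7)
The plan is to use the classical degree-$d$ monomial embedding. Define $\phi \colon \R^n \to \R^M$ by letting the first $\binom{n+d}{d}$ coordinates of $\phi(\bm x)$ list all monomials $\bm x^S$ of total degree $|S| \leq d$ (including the constant $1$), and padding any remaining coordinates with zeros; since $\binom{n+d}{d} \leq n^d \leq M$ this is well-defined, and because the degree-$1$ monomials are among the coordinates, $\phi$ is injective on all of $\R^n$. Take $D'$ to be the pushforward of $D$ under $(\bm x, y) \mapsto (\phi(\bm x), y)$. This map is computable in $\poly(M)$ time by iterating over the $\leq M$ multi-indices.

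For the first part I would argue: given a degree-$d$ PTF $h(\bm x) = \sign\paren{p(\bm x)}$ with $p(\bm x) = \sum_{|S| \leq d} c_S \bm x^S$, build $\bm w \in \R^M$ by placing $c_S$ in the coordinate assigned to monomial $\bm x^S$ and zeros in the padding coordinates. Then $\iprod{\bm w, \phi(\bm x)} = p(\bm x)$ identically, so the LTF $f(\bm x') \coloneqq \sign\paren{\iprod{\bm w, \bm x'}}$ satisfies $f(\phi(\bm x)) = h(\bm x)$ for every $\bm x$. Because $D'$ is the pushforward of $D$ under $(\phi, \mathrm{id})$, this immediately yields $\err_{D'}(f) = \err_D(h)$.

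For the second part, given any binary $f \colon \supp(D') \to \signs$, define $h \colon \R^n \to \signs$ by $h(\bm x) = f(\phi(\bm x))$ whenever $\phi(\bm x) \in \supp(D')$, and by an arbitrary value (say $+1$) elsewhere. Injectivity of $\phi$ makes this well-posed, and again the pushforward relation gives $\err_D(h) = \err_{D'}(f)$. For the conditional probability statement, taking $\phi$ itself as the claimed one-to-one mapping works in both parts: for any $\bm{\tilde{x}'} = \phi(\tilde{\bm x}) \in \supp(D')$, the event $\{\bm x' = \bm{\tilde{x}'}\}$ under $D'$ corresponds exactly to $\{\bm x = \tilde{\bm x}\}$ under $D$ (with identical conditional law of the label $y$), and the construction of $f$ or $h$ guarantees $f(\bm{\tilde{x}'}) \neq y'$ iff $h(\tilde{\bm x}) \neq y$ on these fibers.

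Since the core idea is just the monomial kernel together with injectivity of $\phi$, there is no real obstacle; the only thing to be careful about is the bookkeeping for the conditional probability clause — specifically making sure that the arbitrary choice of $h$ outside $\phi^{-1}(\supp(D'))$ in part (2) is irrelevant (it is, since such $\bm x$ have $D$-measure zero), and that the map between coordinates of $\bm w$ and monomials is recorded explicitly enough to justify the $\poly(M)$ running time.
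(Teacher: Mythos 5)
Your proof is correct and follows essentially the same route as the paper: the monomial embedding $\phi$, the pushforward distribution $D'$, and the observation that $h(\bm x) = f(\phi(\bm x))$ makes classification errors correspond one-to-one. The only (minor, cosmetic) difference is that you note $\phi$ is globally injective because the degree-$1$ monomials appear among the coordinates, whereas the paper merely observes it restricts to a bijection $\supp(D) \to \supp(D')$; both suffice for the conditional-probability clause.
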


The hard distribution will correspond to a mixture of two non-overlapping hCLWE instances for an appropriate choice of parameters.
More precisely, we will use the following lemma
\begin{lemma}
    \label{lem:nhclwe_is_agnostic_ptf}
    Let $d, n \in \N, \beta, \gamma \in \R_{> 0}$ such that $$\beta^2 \leq \gamma^2\,, \quad\quad  \frac d \gamma = \Omega(1) \,.$$
    Further, let $c_+ = 0, c_- = 1/2$, and $\bm w \in \mathbb{S}^{n-1}$.
    Let $$D_+ = \nhclwed{\bm w}{\beta}{\gamma}{c_+}\,, \quad\quad D_- = \nhclwed{\bm w}{\beta}{\gamma}{c_-}\,.$$
    Let $\cC_{4d}$ be the class of all degree-$4d$ polynomial threshold functions (PTFs).
    Consider the distribution over $\R^n \times \Set{-1,+1}$ given by $$D = \frac 1 2 \cdot \Paren{D_+, +1} + \frac 1 2 \cdot \Paren{D_-, -1} \,.$$
    There exists a degree-$4d$ PTF $h^*$ such that $$\err\Paren{h^*} \leq \exp\Paren{- \Omega \Paren{\frac{d^2}{\gamma^2}}} \,.$$
    Moreover, it holds that $$\forall \bm x \in \R^n \colon \quad \Psymb_{\paren{\bm x,y} \sim D} \Paren{h^*(\bm x) \neq y \given \bm x} \in \Set{0,1} \,.$$
    
\end{lemma}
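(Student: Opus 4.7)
The hypothesis $h^\ast$ will depend on $\bm x$ only through $t = \iprod{\bm w, \bm x}$, i.e.\ $h^\ast(\bm x) = \sign(p_{\bm w}(t))$ for a one-dimensional degree-$4d$ polynomial $p_{\bm w}$. Let $s = \sqrt{\beta^2+\gamma^2}$, write $\mu_k^+ = \tfrac{\gamma}{s^2}k$ and $\mu_k^- = \tfrac{\gamma}{s^2}k - \tfrac{1}{2\gamma}$ for the centers of the mixture components of $D_+$ and $D_-$ respectively, and set $I_k^\pm = [\mu_k^\pm - \alpha, \mu_k^\pm + \alpha]$ with $\alpha = \tfrac{1}{10}\cdot\tfrac{\gamma}{s^2}$ as in \cref{eq:nhclwe}. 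The hypothesis $\beta^2 < \tfrac{3}{5}\gamma^2$ is exactly what is needed so that the $4d+1$ centers in $\bigcup_{k=-d}^d\{\mu_k^+\}\cup\bigcup_{k=-d}^{d-1}\{\mu_k^-\}$ lie in strictly increasing order and the closed intervals $I_k^\pm$ are pairwise disjoint with gaps of width $\ge 2\alpha$ between consecutive ones. I will place one root of $p_{\bm w}$ at the midpoint of each of the $4d$ gaps between consecutive centers (taking $p_{\bm w}$ monic), so that $p_{\bm w}$ has exactly $4d$ distinct real roots and is thus degree $4d$.

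Because $\deg p_{\bm w}=4d$ is even and the leading coefficient is positive, the sign of $p_{\bm w}$ alternates across these $4d$ roots, starting and ending with $+$. By construction the sign pattern on the interlaced sequence of intervals is $(+,-,+,-,\dots,+,-,+)$, so $p_{\bm w}>0$ on every $I_k^+$ with $|k|\le d$, $p_{\bm w}<0$ on every $I_k^-$ with $-d\le k\le d-1$, and $p_{\bm w}>0$ for all $t$ outside $[\mu_{-d}^+,\mu_d^+]$ (i.e.\ on the ``outer'' intervals of $S_+$ and $S_-$). Consequently, for $(\bm x,+1)\sim (D_+,+1)$ we have $t\in\bigcup_{k\in\Z} I_k^+$ and hence $h^\ast(\bm x)=+1$ always; for $(\bm x,-1)\sim (D_-,-1)$ we have $h^\ast(\bm x)=-1$ iff $t$ lies in one of the $2d$ central intervals $I_k^-$ with $-d\le k\le d-1$, and $h^\ast(\bm x)=+1$ (a mistake) iff $t$ lies in an outer component of $D_-$. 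The total error is therefore
\begin{equation*}
\err(h^\ast) \;=\; \tfrac{1}{2}\Pr_{\bm x\sim D_-}\bigl(t\in S_-\setminus S_-^{(d)}\bigr) \;=\; \tfrac{1}{2}\cdot\frac{\sum_{k\in\Z\setminus\{-d,\dots,d-1\}} \rho_s(k)}{\sum_{k\in\Z}\rho_s(k)}.
\end{equation*}
The denominator is $\ge 1$ (the $k=0$ term), and the numerator is a one-sided Gaussian tail of the discrete Gaussian $D_{\Z,s}$ at radius $d$; using $\rho_s(k)=e^{-\pi k^2/s^2}\le e^{-\pi d^2/s^2}\,e^{-\pi(|k|-d)^2/s^2}$ for $|k|\ge d$ and summing the geometric-like series (and bounding $s^2\le 2\gamma^2$) gives a tail bound of $\exp(-\Omega(d^2/\gamma^2))$, matching the claim.

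Finally, the ``conditional $\{0,1\}$'' property follows from the fact that $\supp(D_+)$ and $\supp(D_-)$ are disjoint: the closest pair of centers across the two distributions are $\mu_k^+$ and $\mu_k^-$ at distance $\tfrac{1}{2\gamma}$ on one side and $\mu_k^+$ and $\mu_{k+1}^-$ at distance $\tfrac{\gamma^2-\beta^2}{2\gamma s^2}$ on the other; both are at least $2\alpha=\tfrac{\gamma}{5s^2}$, the second distance using precisely $\beta^2<\tfrac{3}{5}\gamma^2$, so the truncated components never overlap. Hence any $\bm x\in\supp(D)$ lies in exactly one of $\supp(D_+),\supp(D_-)$, which fixes $y$, and since $h^\ast$ is a deterministic function of $\bm x$ the conditional probability $\Pr(h^\ast(\bm x)\ne y\mid \bm x)$ is either $0$ or $1$.

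\paragraph{Main obstacle.}
The conceptual content is light; the work is bookkeeping. The one place that requires actual care is verifying that the $4d+1$ centers (together with the exterior of $[\mu_{-d}^+,\mu_d^+]$) really are interlaced in the claimed $(+,-,+,\dots,+)$ pattern and that the midpoint-placed roots all fall strictly inside the gaps between $I_k^\pm$ intervals, rather than inside some $I_k^\pm$—this is where the assumption $\beta^2<\tfrac{3}{5}\gamma^2$ is used, and the same assumption simultaneously guarantees the disjointness of the two supports needed for the deterministic-label property. After this ordering verification, the error bound reduces to a standard Gaussian tail estimate on $D_{\Z,s}$, for which $d/\gamma=\Omega(1)$ ensures the tail is genuinely exponentially small in $d^2/\gamma^2$.
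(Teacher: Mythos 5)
The overall strategy is the same as the paper's (restrict to the hidden direction, interlace intervals, place roots at gap midpoints, bound the error by a discrete Gaussian tail), but the choice of index range for the negative intervals introduces a sign-pattern error that, as written, breaks the argument.

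You choose the negative centers $\mu_k^-$ with $k\in\{-d,\dots,d-1\}$ and positive centers $\mu_k^+$ with $k\in\{-d,\dots,d\}$. Since $\mu_k^- = \mu_k^+ - \tfrac{1}{2\gamma}$ lies to the \emph{left} of $\mu_k^+$, the sorted order of your $4d+1$ centers is
\[
\mu_{-d}^- < \mu_{-d}^+ < \mu_{-d+1}^- < \mu_{-d+1}^+ < \cdots < \mu_{d-1}^- < \mu_{d-1}^+ < \mu_d^+\,,
\]
with sign pattern $(-,+,-,+,\dots,-,+,+)$: it \emph{starts} with a negative center, and the last two entries are both positive. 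Meanwhile a monic degree-$4d$ polynomial with one simple root at each of the $4d$ gap midpoints is positive on the leftmost and rightmost regions and alternates in between, yielding region signs $(+,-,+,-,\dots,-,+)$. These two sequences are out of phase on the first $4d$ positions, so your monic $p_{\bm w}$ is in fact \emph{negative} on each $I_k^+$ (for $-d\le k\le d-1$) and \emph{positive} on each $I_k^-$ (for $-d\le k\le d-1$), i.e.\ the classifier you construct is wrong on essentially all of the central mass, and the resulting error would be close to $1$ rather than exponentially small. Your stated conclusion ``$p_{\bm w}>0$ on every $I_k^+$ with $|k|\le d$'' therefore does not follow, and indeed the displayed list of desiderata is internally inconsistent: $I_{-d}^-$ lies strictly to the left of $\mu_{-d}^+$ (since $\alpha<\tfrac{1}{2\gamma}$), so ``$p_{\bm w}>0$ for all $t$ outside $[\mu_{-d}^+,\mu_d^+]$'' and ``$p_{\bm w}<0$ on $I_{-d}^-$'' cannot both hold.

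The fix is a one-index shift: use negative centers $\mu_k^-$ with $k\in\{-d+1,\dots,d\}$ (as the paper does). Then the sorted sequence of $4d+1$ centers is $\mu_{-d}^+<\mu_{-d+1}^-<\mu_{-d+1}^+<\cdots<\mu_d^-<\mu_d^+$ with sign pattern $(+,-,+,\dots,-,+)$, which matches a monic degree-$4d$ polynomial with roots at the $4d$ gap midpoints, and then the rest of your argument (only $D_-$ tail components are misclassified, bound via the discrete Gaussian tail, disjointness of supports gives the conditional $\{0,1\}$ property) goes through. Alternatively, keeping your index range, you could take $p_{\bm w}$ with \emph{negative} leading coefficient, but this labels both tails $-1$ and so misclassifies the $D_+$ tails and $I_d^+$; the error bound is still of order $\exp(-\Omega(d^2/\gamma^2))$, but the bookkeeping differs from what you wrote.
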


With this in hand, we continue with the proof of \cref{thm:agnostic_ltf}
\begin{proof}[Proof of \cref{thm:agnostic_ltf}]
    Let $1 > c' > c'' > c > 0, \tfrac{1+c}{1+c''} < \delta < 1$ and $$d = \left\lceil \frac 1 4 \cdot \frac \delta {1+c} \cdot \frac{\log M}{\log \log M} \right\rceil\,,$$ where $C$ is a large enough universal constant.
    Further, let $n$ be the largest natural number such that $n^{4d} \leq M$.
    In what follows, we will for simplicity assume that $n^{4d} = M$, all arguments can readily be adapted to the general case.
    We will show that there exists a distribution $D$ over $\R^n \times \Set{-1,+1}$ such that under \cref{assump:disting_clwe} there is no algorithm using fewer than $\exp\Paren{n^\delta}$ samples and running in time at most $\exp\Paren{n^\delta}$ that outputs any binary hypothesis achieving misclassification error better than $1/2 - \tau$, for $$\tau = \exp\Paren{-c_\tau \cdot n^\delta}\,,$$ for a small enough absolute constant $c_\tau$.
    Note that this implies the first part of the theorem since
    \begin{align}
        n^\delta = \exp\Paren{\frac \delta {4d} \cdot \log M} = \exp\Paren{\Paren{1+c} \cdot \log \log M + \Theta\Paren{1}} = \Theta\Paren{\log^{1+c}M} \label{eq:def_n}
    \end{align}
    and hence $$\exp \Paren{n^\delta} = \exp\Paren{O\Paren{\log^{1+c}\Paren{M}}} \quad\quad\text{and}\quad\quad \tau = \exp\Paren{-\Omega\Paren{\log^{1+c}\Paren{M}}} \,.$$
    For this choice of parameters it also holds that $$d = \Theta \Paren{\frac{n^{\delta/(1+c)}}{\log n}} \quad\quad\text{and}\quad\quad M = \exp\Paren{n^{\delta/(1+c)}} \,.$$
    Let $c_+ = 0, c_- = 1/2, \bm w \in \cU\Paren{\mathbb{S}^{n-1}}$ and $$\beta = \frac 1 {\poly(n)}\,, \quad \gamma = 2 \sqrt{n}\,.$$
    First, consider $$D_+ = \nhclwed{\bm w}{\beta}{\gamma}{c_+}\,, \quad\quad D_- = \nhclwed{\bm w}{\beta}{\gamma}{c_-}\,.$$
    We then set $$D = \frac 1 2 \cdot \Paren{D_+, +1} + \frac 1 2 \cdot \Paren{D_-, -1}\,.$$

    Combining \cref{assump:disting_clwe} and \cref{thm:clwe_to_truncated_clwe} it follows that there is no $O\Paren{\exp\Paren{n^\delta}}$-time distinguisher between $D$ and $N\Paren{0, \tfrac 1 {2\pi} \cdot I_n} \times \mathrm{Be}\Paren{\tfrac 1 2}$ which uses at most $m = O\Paren{\exp\Paren{n^\delta}}$ samples and has non-negligible advantage.
    Let $D'$ be the distribution obtained when applying \cref{lem:hard_ptf_implies_hard_ltf} to $D$.
    Assume towards a contraction that there is a learning algorithm that using time and samples (from $D'$) $$\exp\Paren{O\Paren{\log^{1+c}\Paren{M}}} = \exp \Paren{n^\delta}$$ outputs a binary function $f \colon \supp\Paren{D'} \rightarrow \Set{-1,+1}$ such that $$\err_{D'}\Paren{f} \leq \frac 1 2 - \exp\Paren{-\Omega\Paren{\log^{1+c}\Paren{M}}} = \frac 1 2 - \tau \,.$$

    We claim that we can use this to correctly determine the distribution of the above distinguishing problem in time $O\Paren{\exp\Paren{n^\delta}}$ and with probability at least $2/3$.
    Indeed, suppose we are given $m$ samples from one of the two distributions.
    Note that in case they came from $N\Paren{0, \tfrac 1 {2\pi} \cdot I_n} \times \mathrm{Be}\Paren{\tfrac 1 2}$ the label of the resulting distribution will still be distributed as $\mathrm{Be}\Paren{\tfrac 1 2}$ independently of the example.
    We first transform the samples using the mapping of \cref{lem:hard_ptf_implies_hard_ltf} and then run our learning algorith on the first $m/2$ samples to obtain a hypothesis $f$ with the guarantees above - for simplicity, assume that $m$ is even.
    Next, we compute $$\widehat{\err\Paren{f}} = \frac 2 m \sum_{i=m/2}^m \Ind\Paren{f(x_i) \neq y_i} \,.$$
    If $$\Abs{\widehat{\err\Paren{f}} - \frac 1 2} > \frac \tau 2$$ we output $D$ and else we output $N\Paren{0, \tfrac 1 {2\pi} \cdot I_n} \times \mathrm{Be}\Paren{\tfrac 1 2}$.
    Suppose for now, that the samples come from the distribution $D$.   
    Then by assumption our learning algorithm outputs a hypothesis $h$ such that $$\err_{D'}\Paren{f} \leq \frac 1 2 - \tau\,.$$ 
    Note that $\widehat{\err\Paren{f}}$ is a sum of independent random variables bounded between 0 and 1 and with mean $\err\Paren{f}$.
    Hence, by Hoeffding's Inequality~\cite{hoeffding} it follows that $$\Psymb \Paren{\Abs{\widehat{\err\Paren{f}} - \err\Paren{f}} > \frac \tau 3} \leq 2 \exp \Paren{-\tfrac {2m} 9 \cdot \tau^2} \leq \frac 1 3 \,,$$ where we used that $c_\tau$ is a small enough absolute constant.
    Hence, with probability at least $2/3$ we have that $$\Abs{\widehat{\err\Paren{f}} - \frac 1 2} \geq \Abs{\err\Paren{f} - \frac 1 2} - \Abs{\err\Paren{f} - \widehat{\err\Paren{f}}} \geq \frac{2\tau}{3} > \frac \tau 2\,.$$
    Similary, if the samples come from $N\Paren{0, \tfrac 1 {2\pi} \cdot I_n} \times \mathrm{Be}\Paren{\tfrac 1 2}$ it follows that $\Psymb_{(\bm x', y) \sim D'} \Paren{f\Paren{\bm x'} \neq y} = 1/2$ and hence $$\Psymb \Paren{\Abs{\widehat{\err\Paren{f}} - 1/2} > \frac \tau 3} \leq \frac 1 3 \,.$$
    Together this yields that $$\Abs{\widehat{\err\Paren{f}} - \frac 1 2} \leq \frac \tau 3 < \frac \tau 2$$ with probability at least $2/3$.

    Next, we will show the second part of the theorem.
    To this end, note that from \cref{eq:def_n} it follows that $$n = \log^{\tfrac{1+c}{\delta}}\Paren{M}$$ and hence $$\frac d \gamma = \Omega\Paren{\frac{\log^{\Paren{1-\tfrac{1+c}{2\delta}}} M}{\log \log M}} = \Omega\Paren{\frac{\log^{\tfrac 1 2 \cdot \Paren{1-c''}} M}{\log \log M}} = \Omega(1) \,,$$ where we used that $\tfrac{1+c}{1+c''} < \delta$ implies that $$1-\tfrac{1+c}{2\delta} > \tfrac 1 2 - \tfrac 1 2 \cdot c'' > 0\,.$$
    Hence, from \cref{lem:nhclwe_is_agnostic_ptf} it follows that there exists a degree-$4d$ PTF $h^*$ satisfying $$\err_D\Paren{h^*} \leq \exp\Paren{- \Omega \Paren{\frac{d^2}{\gamma^2}}} = \exp\Paren{-\frac{\log^{1-c''} M}{\log \log M}} = \exp\Paren{-\Omega\Paren{\log^{1-c'} M}}\,,$$ for $c'$ slightly larger than $c''$.
    Further, it holds that $$\forall \bm x \in \R^n \colon \quad \Psymb_{\paren{\bm x,y} \sim D} \Paren{h^*(\bm x) \neq y \given \bm x} \in \Set{0,1} \,.$$
    By \cref{lem:hard_ptf_implies_hard_ltf} it follows that for the same distribution $D'$ there exists a linear threshold function $f^* \colon \R^M \rightarrow \Set{-1,+1}$ which has the same misclassification error and conditional error probabilites (with respect to $D'$) which finishes the proof.
\end{proof}

It remains to prove \cref{lem:nhclwe_is_agnostic_ptf}
\begin{proof}[Proof of \cref{lem:nhclwe_is_agnostic_ptf}]
    Let $d, n \in \N, \beta, \gamma \in \R_{> 0}$ such that $$\beta^2 \leq \gamma^2\,, \quad\quad  \frac d \gamma = \Omega(1) \,.$$
    Further, let $c_+ = 0, c_- = 1/2$ and $\bm w \in \mathbb{S}^{n-1}$.
    Recall that $$D_+ = \nhclwed{\bm w}{\beta}{\gamma}{c_+}\,, \quad\quad D_- = \nhclwed{\bm w}{\beta}{\gamma}{c_-}\,.$$
    We will first show that for our choice of parameters the supports of $D_+$ and $D_-$ are disjoint. 
    To this end, recall that for $c \in[0,1)$ the distribution $\nhclwed{\bm w}{\beta}{\gamma}{c}$ has density proportional to
    \begin{align*}
        \sum_{k \in \Z} \rho_{\sqrt{\beta^2 + \gamma^2}}(k\,;c) \cdot \rho\Paren{\pi_{{\bm w}^\perp}(\bm y)} \cdot \rho_{\beta / \sqrt{\beta^2 + \gamma^2}}^\alpha\Paren{\iprod{\bm w, \bm y} \,; \frac{\gamma}{\beta^2 + \gamma^2} \Paren{k-c} } \,,
    \end{align*}
    where $\alpha = \tfrac 1 {10} \cdot \tfrac{\gamma}{\gamma^2+\beta^2}$ and $\pi_{{\bm w}^\perp} \Paren{\bm y}$ denotes the projection of $\bm y$ onto the orthogonal complement of $\bm w$.
    For $k \in \Z$ let $$\mu_k^+ = \frac{\gamma}{\beta^2 + \gamma^2} \Paren{k-c_+} = \frac{\gamma}{\beta^2 + \gamma^2}k \quad\quad\text{and}\quad\quad \mu_k^- = \frac{\gamma}{\beta^2 + \gamma^2} \Paren{k-c_-} = \frac{\gamma}{\beta^2 + \gamma^2} \Paren{k-\frac 1 2}\,.$$
    Consider the intervals
    \begin{align*}
        J^+_k &= \Brac{\mu_k^+ - \alpha, \mu_k^+ + \alpha} \,, \\
        J^-_k &= \Brac{\mu_k^- - \alpha, \mu_k^- + \alpha} \,.
    \end{align*}
    Then it follows that
    \begin{align*}
        \supp\Paren{D_+} &= \bigcup_{k \in \Z} \; \Set{\bm x \in \R^n \suchthat \iprod{\bm w, \bm x} \in J^+_k} \,, \\
        \supp\Paren{D_-} &= \bigcup_{k \in \Z} \; \Set{\bm x \in \R^n \suchthat \iprod{\bm w, \bm x} \in J^-_k} \,.
    \end{align*}
    Since the intervals $J_k^+, J_k^-$ are symmetric around $\mu_k^+$ and $\mu_k^-$ respectively and $$\min \Set{\Abs{\mu_k^+ - \mu_k^-}, \Abs{\mu_k^+ - \mu_{k+1}^-}} = \frac 1 2 \cdot \frac{\gamma}{\beta^2 + \gamma^2} \,,$$
    it follows that the supports of $D_+$ and $D_-$ are disjoint if and only if $$\frac 1 2 \cdot \frac{\gamma}{\beta^2 + \gamma^2} > 2\alpha = \frac 1 5 \cdot \frac \gamma {\beta^2 + \gamma^2} \,,$$
    which always is the case.
    Hence, the supports of $D_+$ and $D_-$ are indeed disjoint.

    Consider next the $2d$ intervals $J^-_{-d+1}, \ldots, J^-_{d}$ and the minimum-degree polynomial $p_{\bm w} \colon \R \rightarrow \R$  that is zero on exactly the points halfway between one of these intervals and the closest $J^+_k$ intervals.
    Further, choose this in such a way that it is non-positive on $J^-_{-d+1}, \ldots, J^-_{d}$.
    Note by construction it has degree $4d$.
    Further, consider the degree-$4d$ PTF
    \[
    \begin{aligned}
        p \colon \R^n &\rightarrow \R\,, \\
        \bm x &\mapsto \sign\Paren{p_{\bm w}(\iprod{\bm w, \bm x})} \,.
    \end{aligned}
    \]
    Let $$S^- = \bigcup_{k=-d+1}^d \Set{\bm x \in \R^n \suchthat \iprod{\bm w, \bm x} \in J^-_k}\,.$$
    Note that for all $\bm x$ such that $D_+(\bm x) \neq 0$ it holds that $$\Psymb_{\paren{\bm x,y} \sim D} \Paren{p(\bm x) \neq y \given \bm x} = 0 $$ since for such $\bm x$ the label $y$ is always equal to $+1$ and so is the value of $p$.
    For the same reason the same holds for all $\bm x \in S^-$.
    Hence, we obtain that $$\Psymb_{\paren{\bm x,y} \sim D} \Paren{p(\bm x) \neq y} = \Psymb_{\paren{\bm x,y} \sim D} \Paren{\bm x \in \supp\Paren{D_-} \setminus S^-} \,.$$

    Let $Z =\tfrac{\beta}{\sqrt{\beta^2 + \gamma^2}} \cdot \rho_{\sqrt{\beta^2 + \gamma^2}}\Paren{ \Z\,; c_-}$ then by definition of $D_-$ and using that for $s > 0$ $$\int_{\abs{z - c} \leq \alpha} \rho_s^\alpha(z \, ; c) \,dz = \int_{\R} \rho_s(z\,;c) \,dz = s\,.$$ it follows that
    \begin{align*}
        \Psymb_{\paren{\bm x,y} \sim D} \Paren{\bm x \in \supp\Paren{D_-} \setminus S^-} &= \frac 1 Z \sum_{\substack{k \leq -d \,, \\ k \geq d+1}} \rho_{\sqrt{\beta^2 + \gamma^2}}(k \,; c_-) \cdot \int_{J^-_k}\rho_{\beta / \sqrt{\beta^2 + \gamma^2}}^\alpha\Paren{z \,; \frac{\gamma}{\beta^2 + \gamma^2} \Paren{k-c_-} } \,dz\\
        &\leq \frac 1 Z \sum_{\abs{k} \geq d} \rho_{\sqrt{\beta^2 + \gamma^2}}(k\,; c_-) \cdot \int_{J^-_k} \rho_{\beta / \sqrt{\beta^2 + \gamma^2}}^\alpha\Paren{z \,; \frac{\gamma}{\beta^2 + \gamma^2} \Paren{k-c_-} } \,dz \\
        &= \frac 1 Z \sum_{\abs{k} \geq d} \rho_{\sqrt{\beta^2 + \gamma^2}}(k\,; c_-) \cdot \int_{\R} \rho_{\beta / \sqrt{\beta^2 + \gamma^2}}\Paren{z \,; \frac{\gamma}{\beta^2 + \gamma^2} \Paren{k-c_-} } \,dz  \\
        &= \frac{\beta}{\sqrt{\beta^2+\gamma^2} \cdot Z} \sum_{\abs{k} \geq d} \rho_{\sqrt{\beta^2 + \gamma^2}}(k\,; c_-) \\
        &= \frac 1 {\rho_{\sqrt{\beta^2 + \gamma^2}}\Paren{\Z\,; c_-}} \cdot \sum_{\abs{k} \geq d} \rho_{\sqrt{\beta^2 + \gamma^2}}(k\,; c_-) \,.
    \end{align*}
    It follows that $$\Psymb_{\paren{\bm x,y} \sim D} \Paren{\bm x \in \supp\Paren{D_-} \setminus S^-} \leq \Psymb\Paren{\Abs{U} \geq d}\,,$$ where $U \sim D_{\Z-c_-, \sqrt{\beta^2 + \gamma^2}}$.
    By standard tailbounds for the discrete Gaussian distribution~\cite[Lemma 2.8]{disc_gaussian_tailbound} we conclude that $$\Psymb\Paren{\Abs{U} \geq d} \leq \Theta \Paren{1} \cdot \exp\Paren{-\pi \cdot \frac{d^2}{\beta^2+\gamma^2}}= \exp\Paren{- \Omega \Paren{\frac{d^2}{\gamma^2}}} \,,$$ where in the last equality we used that $\beta^2 \leq \gamma^2$ and $d/\gamma = \Omega(1)$.

    Moreover, all points for which $\Psymb_{\paren{\bm x,y} \sim D} \Paren{h^*(\bm x) \neq y \given \bm x} \neq 0$ have their projection onto $\bm w$ in $\supp\Paren{D_-} \setminus S^-$.
    However, since for such $\bm x$ the distribution $D$ always outputs a $-1$ label, whereas $p\Paren{\bm x} = +1$, it follows that for such $\bm x$ $$\Psymb_{\paren{\bm x,y} \sim D} \Paren{h^*(\bm x) \neq y \given \bm x} = 1\,.$$
\end{proof}

\section{Hardness of Distribution-Specific Learning}
\label{sec:distribution-specific}

In this section, we show hardness results for agnostic learning even when the $\bm x$ marginal distribution is Gaussian based on \cref{assump:disting_clwe}.
For consistency with the rest of the paper, we show the result where the marginal distribution is equal to $N(0, \tfrac 1 {2\pi} \cdot I_M)$ instead of standard Gaussian.
Recall that for a distribution $D$ over $\R^M \times \Set{-1,+1}$, we denote by $D_{\bm x}$ its marginal distribution over $\R^M$.
More specifically, we will show:
\begin{theorem}
    \label{thm:distribution_specific}
    Let $M \in \N$ and $\e > 0$ be small enough.
    There exists a distribution $D$ over $\R^M \times \Set{-1, +1}$ such that $D_{\bm x} = N(0, \tfrac 1 {2\pi} \cdot I_M)$ and under \cref{assump:disting_clwe} for all $\delta < 1$, there is no algorithm using fewer than $$M^{\Omega\Paren{\tfrac 1 {\log(1/\epsilon)} \cdot \Paren{\tfrac 1 {\epsilon^2}}^\delta}}$$ time and samples that outputs any binary hypothesis $f$ such that \[\err_D\Paren{f} \leq \OPT_{\LTF} + \e \,.\]
    Further, under the same assumption, there is no algorithm using fewer than $$M^{\Omega\Paren{\tfrac 1 {\log(\ell/\epsilon)} \cdot \Paren{\tfrac {\ell^2} {\epsilon^2}}^\delta}}$$ time and samples that outputs any binary hypothesis $f$ such that \[\err_D\Paren{f} \leq \OPT_{\PTF_\ell} + \e \,.\]
\end{theorem}

The hard distribution $D$ is defined as follows:
Let $\gamma = 2\sqrt{M}, \beta = \tfrac 1 \poly(M)$ and $\bm w$ be uniform over $\bbS^{M-1}$.
\begin{itemize}
    \item Draw a sample $(\bm x, z) \sim \clwed{\bm w}{\beta}{\gamma}$.
    \item If $z \in [0,1/2)$ output $(\bm x, +1)$, else output $(\bm x, -1)$.
\end{itemize}
\cref{thm:distribution_specific} will follow directly by the following two lemmas.
\begin{lemma}
    \label{lem:distribution_specific_hard_to_distinguish}
    Let $D$ be as defined above and $\delta < 1$.
    Then $D_{\bm x} = N(0,\tfrac 1 {2\pi} \cdot I_M)$ and under \cref{assump:disting_clwe} there is no algorithm that uses fewer than $2^{M^\delta}$ time and samples and can distinguish $D$ from $\N(0,\tfrac 1 {2\pi} \cdot I_M) \times \mathrm{Be}\Paren{\tfrac 1 2}$ with non-negligible advantage in $M$.
\end{lemma}
\begin{lemma}
    \label{lem:distribution_specific_learner_can_distinguish}
    Let $D$ again be as above and $\e > 0$ be small enough.
    Suppose there is an algorithm using fewer than $$M^{\Omega\Paren{\tfrac 1 {\log(1/\epsilon)} \cdot \Paren{\tfrac 1 {\epsilon^2}}^\delta}}$$ time and samples that outputs a binary hypothesis $f$ such that $\err_D\Paren{f} \leq \OPT_\LTF + \e$.
    Then there is an algorithm that uses the same amount of time and samples that can distinguish $D$ from $N(0,\tfrac 1 {2\pi} \cdot I_M) \times \mathrm{Be}\Paren{\tfrac 1 2}$ with non-negligible advantage.
    Similarly, if there is an algorithm using fewer than $$M^{\Omega\Paren{\tfrac 1 {\log(\ell/\epsilon)} \cdot \Paren{\tfrac \ell {\epsilon^2}}^\delta}}$$ time and samples that outputs a binary hypothesis $f$ such that $\err_D\Paren{f} \leq \OPT_{\PTF_\ell} + \e$.
    Then there is an algorithm that uses the same amount of time and samples that can distinguish $D$ from $N(0,\tfrac 1 {2\pi} \cdot I_M) \times \mathrm{Be}\Paren{\tfrac 1 2}$ with non-negligible advantage.
\end{lemma}

We start by proving \cref{lem:distribution_specific_hard_to_distinguish}.
\begin{proof}[Proof of \cref{lem:distribution_specific_hard_to_distinguish}]
    We first show that $D_{\bm x} = N(0, \tfrac 1 {2\pi} \cdot I_M)$.
    Let $p$ be the density of $\clwed{\bm w}{\beta}{\gamma}$ and $p_D$ the density of $D$.
    For $(\bm x, y) \in \R^M \times \Set{-1, +1}$ it holds that \[ p_D\Paren{\bm x \suchthat y} = \begin{cases}
        p \Paren{\bm x \suchthat z \in [0,1/2)} \,, &\quad \text{if $y = +1$,} \\
        p \Paren{\bm x \suchthat z \in [1/2,1)} \,, &\quad \text{if $y = -1$.}
    \end{cases}\]
    Let $\bm x \in \R^M$, we compute the density $p_{\bm x}(\bm x)$ of $D_{\bm x}$ at point $\bm x$.
    \begin{align*}
        p_{\bm x}(\bm x) &= \frac{1}{2} \cdot p \Paren{\bm x \suchthat z \in [0,1/2)} + \frac{1}{2} \cdot p \Paren{\bm x \suchthat z \in [1/2,1)} = \int_0^1 p(\bm x, c) \, dc \\
        &= \rho(\bm x) = N(0, \tfrac 1 {2\pi} \cdot I_M) (\bm x)\,.
    \end{align*}

    Further, let $m = 2^{M^\delta}$.
    Given a $T$-time distinguisher $\cA$ between
    \[
        D^m  \quad\text{and}\quad N(0, \tfrac 1 {2\pi} \cdot I_M)^m \times \mathrm{Be}\Paren{\tfrac 1 2}^m
    \]
    we construct a $O(T)$-time distinguisher between
    \[
        \clwem{\bm w}{\beta}{\gamma} \quad\text{and}\quad N(0, \tfrac 1 {2\pi} \cdot I_M)^m \times U\Paren{[0,1)}^m \,.
    \]

    Given samples $(\bm x, z)$ from either $\clwed{\bm w}{\beta}{\gamma}$ or $N(0, \tfrac 1 {2\pi} \cdot I_M) \times U\Paren{[0,1)}$, we construct new samples $(\bm x', y')$ as follows.
    \begin{itemize}
        \item If $z \in [0,1/2)$ output $(\bm x, +1)$,
        \item else output $(\bm x, -1)$.
    \end{itemize}
    In case $(\bm x, z)$ came from $\clwed{\bm w}{\beta}{\gamma}$, $(\bm x', y')$ will be distributed according to $D$ by definition.
    In case $(\bm x, z)$ came from $N(0, \tfrac 1 {2\pi} \cdot I_M) \times U\Paren{[0,1)}$, $\bm x'$ and $y'$ will be independent and with marginals $N(0, \tfrac 1 {2\pi} \cdot I_M)$ and $\mathrm{Be}\Paren{\tfrac 1 2}$, respectively, as desired.
    Hence, we can directly use our distinguisher $\cA$ to distinguish the two cases.
\end{proof}

Next, we will proof \cref{lem:distribution_specific_learner_can_distinguish}
\begin{proof}[Proof of \cref{lem:distribution_specific_learner_can_distinguish}]
    We start by proving the result about LTFs, the result about PTFs will follow in the same way.
    Let $\delta > 0$ and $\tau = \tfrac{1}{\poly(M)}$.
    Suppose $\OPT_\LTF$ and $\e$ are such that $$\err_D\Paren{f} \leq \OPT_\LTF + \e \leq \tfrac 1 2 - \tau \,.$$
    We proceed similarly to the proof of \cref{thm:agnostic_ltf}.
    Given $m = 2^{M^\delta}$\footnote{For simplicity assume that $m$ is even.} samples $(\bm x_1, y_1), \ldots (\bm x_m, y_m)$ from either $D$ or $N(0, \tfrac 1 {2\pi} \cdot I_M)\times \mathrm{Be}\Paren{\tfrac 1 2}$ we run our algorithm on the first $m/2$ samples to obtain a binary hypothesis $f$.
    Let $$\widehat{\err\Paren{f}} = \frac{2}{m} \sum_{i=m/2 + 1}^m \Ind(f(\bm x_i) \neq y_i)\,.$$
    If $\abs{\widehat{\err\Paren{f}} - \tfrac 1 2 } > \frac \tau 2$, output $D$, else output $N(0, \tfrac 1 {2\pi} \cdot I_M)\times \mathrm{Be}\Paren{\tfrac 1 2}$.
    By an application of Hoeffding's Inequality, it follows as in the proof of \cref{thm:agnostic_ltf}, that this test successfully distinguishes between the two distributions with probability at least $2/3$.

    Assume, that for an absolute constant $c > 0$, it holds that
    \[
        \OPT_\LTF \leq \frac 1 2 - \frac c \gamma = \frac 1 2 - \frac c {2 \sqrt{M}}\,.
    \]
    We will verify this shortly.
    This implies, that we can choose $\e = \Omega\Paren{1 / \sqrt{M}}$ and it still holds that $\OPT_\LTF + \e \leq \tfrac 1 2 - \tau$.
    Since
    \[
        2^{M^\delta} = M^{\Omega\Paren{\frac 1 {\log(1/\e)} \cdot \Paren{\frac 1 {\e^2}}^\delta}} \,,
    \]
    the result will follow.

    We next turn to bounding $\OPT_\LTF$.
    First, note that the density of $D$ is equal to
    \begin{align*}
        p_D(\bm x, y) &=  \frac 1 \beta \rho(\bm x) \cdot \begin{cases}
            \sum_{k \in \Z} \int_0^{1/2} \rho_{\beta}\Paren{ c +k - \gamma \iprod{\bm w, \bm x}} \, dc\,, &\quad \text{if $y = +1$,} \\
            \sum_{k \in \Z} \int_{1/2}^1 \rho_{\beta}\Paren{ c +k - \gamma \iprod{\bm w, \bm x}} \, dc\,, &\quad \text{if $y = -1$.} 
        \end{cases}
    \end{align*}
    To simplify the analysis we will work with the distribution $D'$ whose density is equal to
    \begin{align*}
        p_{D'}(\bm x, y) &= \rho(\bm x) \cdot \begin{cases} \sum_{k \in \Z} \Ind\Paren{\gamma \iprod{\bm w, \bm y} \in [k, k+1/2)}\,, &\quad \text{if $y = +1$,} \\
            \sum_{k \in \Z} \Ind\Paren{\gamma \iprod{\bm w, \bm y} \in [k+1/2, k+1)} \,, &\quad \text{if $y = -1$.} 
        \end{cases}
    \end{align*}
    By \cref{lem:tvd_distribution_specific} it holds that $\TVD{D}{D'} \leq \tfrac{1}{\poly(M)}$ and hence if $\tilde{f}$ is any linear threshold function it holds that \[\err_D\Paren{\tilde{f}} = \Psymb_{(\bm x, y) \sim D}\Paren{\tilde{f}(\bm x) \neq y} \leq \Psymb_{(\bm x, y) \sim D'}\Paren{\tilde{f}(\bm x) \neq y} + \frac{1}{\poly(M)} = \err_{D'} \Paren{\tilde{f}} + \frac{1}{\poly(M)}\,.\]
    Finally, in \cref{lem:distribution_specific_optimal_ltf} we show that there exists a halfspace $f^*$ and an absolute constant $c' > 0$ such that $\err_{D'}\Paren{f^*} \leq \tfrac 1 2 - \frac {c'} \gamma$ implying that there is a second absolute constant $c> 0$ such that 
    \[
        \OPT_\LTF \leq \err_D\Paren{f^*} \leq \frac 1 2 - \frac {c} \gamma \,.
    \]

    Next, we turn to the result about PTFs.
    Analogously as above, it follows from \cref{lem:distribution_specific_optimal_ltf} that there exists a degree-$\ell$ PTF, such that $\err_{D'}\Paren{f^*} \leq \tfrac 1 2 - \frac {c' \ell} \gamma$.
    Hence, in this case, there is an absolute constant $c > 0$, such that $\OPT_{\PTF_\ell} \leq \frac 1 2 - \frac {c \ell} \gamma$.
    Hence, we can choose $\e = \Omega\Paren{\ell / \sqrt{M}}$, implying that
    \[
        2^{M^\delta} = M^{\Omega\Paren{\frac 1 {\log(\ell/\e)} \cdot \Paren{\frac \ell \e}^{2\delta}}}
    \]
\end{proof}
\section{Reductions}

The goal of this section is to show that the mixture distributions which we used to prove hardness of learning in the agnostic model  are hard to learn under \cref{assump:disting_clwe}.
In particular, our goal will be to prove the following theorem
\begin{theorem}
    \label{thm:clwe_to_truncated_clwe}
    Let $n, m \in \N$ with $2^n > m > n$, and let $\gamma, \beta, \e \in \R_{> 0}, c \in [0,1)$ such that
    \begin{align*}
        0 &\leq \beta \leq \gamma \,,\\
        \beta &= \tfrac 1 {\poly(n)} \,.
    \end{align*}
    Assume that there is no $(T + \poly(n,m))$-time distinguisher between
    \[
    \begin{aligned}
        \clwem{m}{\gamma}{\beta} \quad&\text{and}\quad N\Paren{0, \tfrac 1 {2\pi} \cdot I_n}^m \times U\Paren{[0,1)}^m
    \end{aligned}
    \]
    with advantage $\e$.
    Let $m' = \tfrac m {\poly(n)}$.
    Then, there is no $T$-time distingiusher between
    \[
    \begin{aligned}
        D_c \coloneqq \nhclwem{m'}{\gamma}{2\beta}{c} \quad&\text{and}\quad N\Paren{0, \tfrac 1 {2\pi} \cdot I_n}^m   
    \end{aligned}
    \]
    with advantage $\e-\negl(n)$.
    Moreover, let $c_+,c_- \in [0,1)$.
    Then there is no $T$-time distingiusher between
    \[
    \begin{aligned}
        \frac 1 2 \cdot \Paren{D_{c_+}, +1} + \frac 1 2 \cdot \Paren{D_{c_-}, -1} \quad&\text{and}\quad N\Paren{0, \tfrac 1 {2\pi} \cdot I_n} \times \mathrm{Be}\Paren{\frac 1 2}        
    \end{aligned}
    \]
    with advantage $\e-\negl(n)$ that uses at most $m'$ samples.
\end{theorem}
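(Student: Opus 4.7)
The plan is a sample-level reduction that converts a single $\clwed{\bm w}{\beta}{\gamma}$ sample into a single $\nhclwed{\bm w}{2\beta}{\gamma}{c}$ sample (conditional on acceptance), while preserving the marginal $N(0,\tfrac{1}{2\pi}I_n)$ on the null side. Given $(\bm y, z)$, I would draw fresh noise $e' \sim N(0, 3\beta^2/(2\pi))$ and accept if $\abs{z + e' - c \bmod 1} \leq \alpha_1$ for some $\alpha_1 = 1/\poly(n)$. Since $e + e' \sim N(0, (2\beta)^2/(2\pi))$, the pair $(\bm y, z + e' \bmod 1)$ is distributed as $\clwed{\bm w}{2\beta}{\gamma}$, so by a direct computation integrating its joint density over $z \in [c - \alpha_1, c + \alpha_1]$, the conditional density of $\bm y$ upon acceptance is proportional to $p_{\hclwed{\bm w}{2\beta}{\gamma}{c}}(\bm y)$ up to a multiplicative second-order error $O((\alpha_1/\beta)^2)$. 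The acceptance probability is $2\alpha_1 = 1/\poly(n)$, so $m$ input samples yield $m' = m/\poly(n)$ accepted samples, except with probability $\exp(-\Omega(m'))$ by a Chernoff bound on the acceptance indicators.

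\textbf{Truncation and the null side.} Passing from $\hclwed{\bm w}{2\beta}{\gamma}{c}$ to $\nhclwed{\bm w}{2\beta}{\gamma}{c}$ is free in total variation: each mixture component has standard deviation $\Theta(\beta/\gamma)$ in the $\bm w$-direction, while the truncation window $\alpha = \tfrac{1}{10}\gamma/(\gamma^2 + 4\beta^2)$ exceeds this by a factor of $\poly(n)$, so each component loses only $\exp(-\poly(n))$ mass; summing against the mixture weights $\rho_{\sqrt{4\beta^2 + \gamma^2}}(k)$ gives $\mathrm{TVD}(\hclwed{\bm w}{2\beta}{\gamma}{c}, \nhclwed{\bm w}{2\beta}{\gamma}{c}) = \negl(n)$, which transfers to the $m'$-fold products by subadditivity. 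On the null side (Gaussian $\times$ Uniform), $\bm y$ is independent of $(z, e')$, so acceptance is independent of $\bm y$ and the accepted marginal is exactly $N(0,\tfrac{1}{2\pi}I_n)$. Assembling: any $T$-time distinguisher $\cA$ for $D_c$ vs.\ $N(0,\tfrac{1}{2\pi}I_n)^{m'}$ with advantage $\e - \negl(n)$ yields a $(T + \poly(n,m))$-time distinguisher with the same advantage (up to $\negl(n)$) for the CLWE vs.\ Gaussian-Uniform problem, contradicting the hypothesis.

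\textbf{The labelled mixture.} For the second statement, I would modify the sample-level reduction to first draw a uniform coin $b \in \Set{+1, -1}$ independently per input sample, and condition on $\abs{z + e' - c_{+} \bmod 1} \leq \alpha_1$ if $b = +1$ and on $\abs{z + e' - c_{-} \bmod 1} \leq \alpha_1$ if $b = -1$; accepted samples are output as $(\bm y, b)$. On the CLWE side, by the same density calculation, the conditional distribution of an accepted pair is $\tfrac{1}{2}(\nhclwed{\bm w}{2\beta}{\gamma}{c_+}, +1) + \tfrac{1}{2}(\nhclwed{\bm w}{2\beta}{\gamma}{c_-}, -1)$ up to $\negl(n)$ in TV. On the null side, $\bm y$ is independent of $(z, e', b)$, so accepted pairs are exactly distributed as $N(0,\tfrac{1}{2\pi}I_n) \times \mathrm{Be}(\tfrac{1}{2})$. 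The rest of the reduction is identical.

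\textbf{Main obstacle.} The delicate step is bounding the per-sample TV error in the first paragraph: one must verify that integrating the CLWE density against an $\alpha_1$-window produces the hCLWE density with only second-order $O((\alpha_1/\beta)^2)$ multiplicative error, which in turn requires tracking both the Taylor remainder in the $z$-direction and the wrap-around contributions from non-central integer shifts. The factor-of-two noise blow-up from $\beta$ to $2\beta$ in the theorem is exactly the slack that lets this analysis go through without forcing $\alpha_1$ to be exponentially small, which is what keeps the overall sample loss $m/m'$ merely polynomial in $n$.
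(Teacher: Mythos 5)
Your proposal gets the high-level reduction architecture right (CLWE $\to$ hCLWE via conditioning on $z\approx c$, then truncate to nhCLWE with negligible TV loss, then check the null side is preserved), and the truncation and null-side paragraphs are essentially what the paper does (matching \cref{lem:tvd_hclwe_and_nhclwe} and \cref{lem:small_tvd_advantage}). But the mechanism you use in the first paragraph to go from CLWE to hCLWE has a gap that the paper's proof is specifically designed to avoid.

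You propose a \emph{hard threshold}: add fresh noise $e'$ and accept iff $\abs{z+e'-c \bmod 1}\leq \alpha_1$. You correctly note the per-sample multiplicative error this introduces is of order $(\alpha_1/\beta)^2$, and you hope to take $\alpha_1 = 1/\poly(n)$ so the acceptance rate stays $1/\poly(n)$. But this error does not vanish: it is a fixed polynomial deviation per accepted sample, and the theorem requires the reduction to survive a distinguisher that sees up to $m' = m/\poly(n)$ samples where $m$ can be as large as $2^{n}$ (and in the application to \cref{thm:agnostic_ltf}, $m' \approx 2^{n^{\delta}}$). The total variation between the $m'$-fold product of your approximate output and the true $\nhclwem{m'}{\gamma}{2\beta}{c}$ is then bounded only by $m'\cdot O((\alpha_1/\beta)^2)$, which is enormous for any $\alpha_1 = 1/\poly(n)$. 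To make the product-TV negligible you would need $\alpha_1$ sub-exponentially small in $n^{\delta}$, which destroys the $m/m' = \poly(n)$ sample efficiency. The factor-of-two blow-up from $\beta$ to $2\beta$ does not fix this: it has nothing to do with absorbing Taylor error.

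The paper avoids the problem entirely by using a \emph{soft} acceptance probability. In \cref{lem:clwe_to_hclwe}, a CLWE sample $(\bm y, z)$ is accepted with probability $g(z) = \tfrac{1}{M}\sum_{k\in\Z}\rho_\delta(z+k+c)$, a Gaussian-shaped (not indicator) filter. The Gaussian product identity \cref{fact:rho_fact} then shows the conditional law of an accepted $\bm y$ is \emph{exactly} $\hclwed{\bm w}{\sqrt{\beta^2+\delta^2}}{\gamma}{c}$, with zero approximation error, at acceptance rate $\geq \delta/4$. Setting $\delta = \sqrt{3}\beta$ yields the $2\beta$ noise level in the theorem; the blow-up is a structural consequence of the soft filter contributing its own width $\delta$ to the $z$-direction, not a slack term. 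This exactness is what makes the TV cost across $m'$ samples come only from the truncation step (\cref{lem:tvd_hclwe_and_nhclwe}), which really is $\exp(-\poly(n))$ per sample. If you replace your hard threshold with the paper's Gaussian rejection probability and invoke \cref{fact:rho_fact}, the rest of your argument goes through.
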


One key ingredient is the following straightforward adaptation of~\cite[Lemma 4.1]{CLWE}.
We will include its proof for completeness at the end of this section.

\begin{lemma}[straightforward extension of Lemma 4.1 in~\cite{CLWE}]
    \label{lem:clwe_to_hclwe}
    For every $\bm w \in \R^n$ there is a $\poly(n, 1/\delta)$-time probabilistic algorithm that takes as input parameters $\delta \in (0,1), c \in [0,1)$, and samples from $\clwed{\bm w}{\gamma}{\beta}$ and outputs samples from $\hclwed{\bm w}{\sqrt{\beta^2 + \delta^2}}{\gamma}{c}$.
    More specifically, given $\poly(n, 1/\delta)$ CLWE samples the algorithm runs in time $\poly(n, 1/\delta)$ and with probability at least $1-\exp(-\poly(n, 1/\delta))$ outputs at least one HCLWE sample.
    Further, if given samples from $N\Paren{0, \tfrac 1 {2\pi} \cdot I_n} \times \cU\Paren{[0,1)}$ the procedure will output samples from $N\Paren{0, \tfrac 1 {2\pi} \cdot I_n}$.
\end{lemma}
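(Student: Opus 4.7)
The plan is a simple Gaussian rejection-sampling scheme. Given a CLWE sample $(\bm y, z) \in \R^n \times [0,1)$, I would accept it with probability
\[
p(z) \;=\; \frac{1}{\rho_\delta(\Z)} \cdot \sum_{j \in \Z} \rho_\delta(z - c + j) \;\in\; [0,1],
\]
and upon acceptance output $\bm y$ (discarding $z$). The algorithm simply repeats this on independent CLWE samples until an accepted $\bm y$ is produced. Since $p(z)$ depends only on $z$, the null case is immediate: if the input is drawn from $N(0,\tfrac 1{2\pi} I_n) \times \cU([0,1))$, then $z$ is independent of $\bm y$ and the conditional distribution of $\bm y$ given acceptance is still $N(0,\tfrac 1{2\pi} I_n)$.

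For the planted case, the main step is a direct Gaussian convolution computation. The joint density of $(\bm y,z)$ under $\clwed{\bm w}{\gamma}{\beta}$ is proportional to $\rho(\bm y) \cdot \sum_{k \in \Z} \rho_\beta(z - \gamma \iprod{\bm w,\bm y} + k)$, so after multiplying by $p(z)$ and integrating $z$ over $[0,1)$, I would unfold the sum over $k$ to extend integration to all of $\R$ (using that $\sum_j \rho_\delta(\cdot - c + j)$ is $1$-periodic), obtaining a marginal on $\bm y$ proportional to
\[
\rho(\bm y)\cdot \sum_{j \in \Z} \int_{\R} \rho_\beta(u - \gamma\iprod{\bm w,\bm y}) \,\rho_\delta(u - c + j)\, du.
\]
The convolution identity $\rho_\beta * \rho_\delta = \frac{\beta\delta}{\sqrt{\beta^2+\delta^2}} \rho_{\sqrt{\beta^2+\delta^2}}$ (which is immediate from the Fourier representation of the Gaussian) then collapses the integral, yielding a density proportional to $\rho(\bm y) \cdot \sum_{k \in \Z} \rho_{\sqrt{\beta^2+\delta^2}}(\gamma \iprod{\bm w,\bm y}\,;\, k-c)$. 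This is exactly the hCLWE density from \cref{eq:hclwe} with noise parameter $\sqrt{\beta^2+\delta^2}$.

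It remains to bound the acceptance probability from below so that $\poly(n,1/\delta)$ CLWE samples suffice. Since the CLWE marginal on $z$ is close in total variation to $\cU([0,1))$ for $\beta = 1/\poly(n)$ and $\gamma \geq 2\sqrt n$ (this is the smoothness property used in~\cite{CLWE}; the negligible TV-distance error is absorbed into the $\negl(n)$ term in the statement), we have
\[
\E_{z}\bigl[p(z)\bigr] \;=\; \frac{1}{\rho_\delta(\Z)} \int_0^1 \sum_{j\in\Z}\rho_\delta(z-c+j)\,dz \;=\; \frac{\delta}{\rho_\delta(\Z)} \;=\; \Omega(\delta),
\]
where I use that $\rho_\delta(\Z) \le 1 + O(\exp(-\pi/\delta^2))$ for $\delta < 1$. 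A Chernoff bound over $\poly(n,1/\delta)$ independent CLWE samples then guarantees at least one acceptance except with probability $\exp(-\poly(n,1/\delta))$, and to produce many hCLWE samples in parallel one simply repeats the procedure on disjoint batches.

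The only delicate point is the smoothness statement used to control the acceptance probability in the planted case; this is where one needs $\beta = 1/\poly(n)$ and $\gamma \geq 2\sqrt n$, and it incurs the $\negl(n)$ TV-loss that propagates to the final advantage. Everything else is a direct Gaussian calculation.
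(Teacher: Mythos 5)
Your construction is essentially the paper's: the same rejection function (your $p(z)=g_0(z)/\rho_\delta(\Z)$ is the same as the paper's $g_0(z)/M$, since $M=\sup_z g_0(z)=g_0(-c\bmod 1)=\rho_\delta(\Z)$), the same reindexing of the double sum, and the same Gaussian identity — you phrase it as a convolution after unfolding the $z$-integral to $\R$, while the paper first applies the product-of-Gaussians identity (\cref{fact:rho_fact}) and then integrates, but these two routes are algebraically the same computation. The null case and the shape of the resulting density match.

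The one place you diverge, and where you introduce an unnecessary weakness, is the acceptance-probability bound. You lower-bound $\E_z[p(z)]$ by appealing to the fact that the CLWE marginal of $z$ is TV-close to $\cU([0,1))$, which requires the ``smoothing'' conditions $\gamma\geq 2\sqrt n$, $\beta=1/\poly(n)$ and bakes in an additive $\negl(n)$ loss. The lemma as stated has no $\negl(n)$ slack (that term appears only in the consuming \cref{thm:clwe_to_truncated_clwe}) and imposes no constraint on $\gamma,\beta$, so your bound is not quite what the lemma asserts. The paper avoids this entirely by computing the acceptance probability exactly: integrating the already-derived density $p'(\bm y)$ gives $\frac{\delta}{\sqrt{\beta^2+\delta^2+\gamma^2}\cdot M}\,\rho_{\sqrt{\beta^2+\delta^2+\gamma^2}}(\Z)$, and Poisson summation (\cref{fact:poisson_summation}) yields $\rho_{s}(\Z)\geq s$, so the acceptance probability is at least $\delta/M\geq\delta/4$ unconditionally. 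This is cleaner and matches the unconditional form of the lemma; you should replace the smoothness argument with that exact computation (which you have essentially already done when deriving $p'$).

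Minor: you assert $p(z)\in[0,1]$ but do not verify that $\rho_\delta(\Z)$ is the supremum of $g_0$; it is (the periodic Gaussian sum is maximized at a center), and you also need the bound $\rho_\delta(\Z)<4$ for $\delta<1$ to turn $\delta/\rho_\delta(\Z)$ into $\Omega(\delta)$ — both are easy and are in the paper, but should be stated.
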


With this in had, we can prove \cref{thm:clwe_to_truncated_clwe}
\begin{proof}[Proof of \cref{thm:clwe_to_truncated_clwe}]
    Let $\gamma, \beta, \e \in \R_{> 0}, c \in [0,1)$.
    Assume that there is no $(T + \poly(n,m))$-time distinguisher between $$\clwem{m}{\gamma}{\beta} \quad\text{and}\quad N\Paren{0, \tfrac 1 {2\pi} \cdot I_n}^m \times U\Paren{[0,1)}^m$$ with advantage $\e$.
    Let $m' = \tfrac m {\poly(n)}$.
    We claim that this implies that there is no $T$-time distinguisher between $$\hclwem{m'}{\gamma}{2\beta}{c} \quad\text{and}\quad N\Paren{0, \tfrac 1 {2\pi} \cdot I_n}^{m'}$$ with advantage $\e - \negl(n)$.
    Note that this implies the conclusion of the theorem since by the second part of \cref{lem:tvd_hclwe_and_nhclwe} the total variation distance between $\hclwed{\bm w}{\beta}{\gamma}{c}$ and $\nhclwed{\bm w}{\gamma}{\beta}{c}$ is at most $$4 \cdot \exp \Paren{- \frac{1}{100\beta^2}}$$ for every $\bm w$ that is unit.
    Hence, since $m < 2^n$ the total variation distance between the respective $m'$-fold product distributions is at most $$4m \cdot \exp \Paren{- \frac{1}{100\beta^2}} = \negl(n)\,.$$
    Note that we can apply this since in our case $0 \leq \beta \leq \gamma$.
    See \cref{lem:small_tvd_advantage} for a formal proof of the fact that a small change in total variation distance results in only a small change in the distinguishing advantage.

    To show the claim, we will use \cref{lem:clwe_to_hclwe}.
    Concretely, assume that there is a $T$-time distinguisher between $$\hclwem{m'}{\gamma}{2\beta}{c} \quad\text{and}\quad N\Paren{0, \tfrac 1 {2\pi} \cdot I_n}^{m'}\,.$$
    We will use this to build a $(T + \poly(n,m))$-time distinguisher between $$\clwem{m}{\gamma}{\beta} \quad\text{and}\quad N\Paren{0, \tfrac 1 {2\pi} \cdot I_n}^m \times U\Paren{[0,1)}^m$$ as follows:
    Let $\bm w$ denote the secret vector of the $\clwe$ distribution.
    Given $m$ samples from either $\clwed{\bm w}{\gamma}{\beta}$ or $N\Paren{0, \tfrac 1 {2\pi} \cdot I_n} \times U([0,1))$ we invoke the algorithm of \cref{lem:clwe_to_hclwe} with $$\delta = \sqrt{3}\beta = \Omega(1/\poly(n))\,.$$
    In case the samples came from $\clwed{\bm w}{\gamma}{\beta}$ with probability at least $$1 - \exp(-\poly(n, 1/\delta)) = 1 - \negl(n)$$ we obtain in time $\poly(n)$ at least $m' = \tfrac m {\poly(n)}$ samples from $\hclwed{\bm w}{2\beta}{\gamma}{c}$.
    In case the samples came from $N\Paren{0, \tfrac 1 {2\pi} \cdot I_n}^n \times U([0,1))$ with at least the same probability we obtain in time $\poly(n)$ at least $m' = \tfrac m {\poly(n)}$ samples from $D_1^n$.
    Hence, if we had a $T$-time distinguisher between $$\hclwem{m'}{\gamma}{2\beta}{c} \quad\text{and}\quad N\Paren{0, \tfrac 1 {2\pi} \cdot I_n}^{m'}$$ with advantage $\e - \negl(n)$, this would directly yield a $(T + \poly(n,m))$-time distinguisher between $$\clwem{m}{\gamma}{\beta} \quad\text{and}\quad N\Paren{0, \tfrac 1 {2\pi} \cdot I_n}^m \times U\Paren{[0,1)}^m$$ with advantage $\e$.
    The shift of $\negl(n)$ in the advatage is due to the fact that the sample conversion algortihm can fail with probability $\negl(n)$.

    For the second part of the theorem, we first note, that we can again replace the truncated mixture distributions by the non-truncated ones by invoking \cref{lem:small_tvd_advantage}.
    By construction, the respective mixture distributions have total variation distance at most $$p \cdot \negl(n) + (1-p) \cdot \negl(n) = \negl(n)\,.$$
    The result follows since we can generate samples from this mixture from samples from the CLWE distribution as follows:
    With probabily $p$ we invoke the procedure of \cref{lem:clwe_to_hclwe} with $c = c_+$ and with probability $1-p$ with $c = c_-$.
\end{proof}

Finally, we give the proof of \cref{lem:clwe_to_hclwe}

\begin{proof}
    Let $\delta \in (0,1), c \in [0,1)$ and $\beta, \gamma > 0$.
    Without loss of generality assume that $\bm w = \bm e_1$.
    Given samples from $\clwed{\bm w}{\beta}{\gamma}$ the idea is to perform rejection sampling to obtain samples from $\hclwed{\bm w}{\sqrt{\beta^2 + \delta^2}}{\gamma}{c}$.
    Concretely, let $g \colon [0,1) \rightarrow [0,1]$ be given by $g(z) = g_0(z)/M$, where $$g_0(z) = \sum_{k \in \Z} \rho_\delta(z+k+c) \,,\quad\quad M = \sup_{z\in [0,1)} g_0(z)\,.$$
    For a CLWE sample $(\bm y, z)$, output $\bm y$ with probability $g(z)$.\footnote{Note that by \cite[Section 5.2]{classical_lwe} the function $g(z)$ is efficiently computable.}
    Recall that the density of $\clwed{\bm w}{\beta}{\gamma}$ is given by
    \begin{align*}
        p(\bm y, z) = \frac{1}{\beta} \cdot  \rho \Paren{\bm y} \cdot \sum_{k \in \Z} \rho_\beta \Paren{z + k - \gamma \bm y_1} \,.
    \end{align*}
    Using \cref{fact:rho_fact} (in the third equality) and that for all $c \in \R$ it holds that $\int_{\R} \rho_s(x-c) \, dx = s$ we obtain that the density $p'$ of the distribution given by the rejection sampling, i.e., of outputting $\bm y$ and accept, is given by
    \begin{align*}
        &p'(\bm y) = \int_{[0,1)} p(\bm y, z) g(z) \,dz = \frac{\rho(\bm y)}{\beta \cdot M} \cdot \int_{[0,1)} \sum_{k_1, k_2 \in \Z} \rho_\beta\Paren{z + k_1 - \gamma \bm y_1} \cdot \rho_\delta\Paren{z+k_2+c} \, dz \\
        &= \frac{\rho(\bm y)}{\beta \cdot M} \cdot \int_{[0,1)} \sum_{k_1, k_2 \in \Z} \rho_{\sqrt{\beta^2 + \delta^2}} \Paren{\gamma \bm y_1 - k_1 + k_2 + c} \rho_{\beta \delta / \sqrt{\beta^2 + \delta^2}} \Paren{z + \tfrac{\beta^2}{\beta^2 + \gamma^2}(k_2 + c) + \tfrac{\gamma^2}{\beta^2 + \gamma^2}(k_1 - \gamma \bm y_1)} \, dz \\
        &= \frac{\rho(\bm y)}{\beta \cdot M} \cdot \int_{[0,1)} \sum_{k, k_2 \in \Z} \rho_{\sqrt{\beta^2 + \delta^2}} \Paren{\gamma \bm y_1 - k + c} \cdot \rho_{\beta \delta / \sqrt{\beta^2 + \delta^2}} \Paren{z + k_2 + \tfrac{\beta^2}{\beta^2 + \delta^2} c + \tfrac{\delta^2}{\beta^2 + \delta^2}(k - \gamma \bm y_1)} \, dz \\
        &= \frac{\rho(\bm y)}{\beta \cdot M} \cdot \sum_{k\in \Z} \rho_{\sqrt{\beta^2 + \delta^2}} \Paren{\gamma \bm y_1 - k + c} \cdot \int_{\R} \rho_{\beta \delta / \sqrt{\beta^2 + \delta^2}} \Paren{x +  \tfrac{\beta^2}{\beta^2 + \delta^2} c + \tfrac{\delta^2}{\beta^2 + \delta^2}(k - \gamma \bm y_1)} \, dx \\
        &= \frac{\delta \cdot\rho(\bm y)}{\sqrt{\beta^2 + \delta^2} \cdot M} \sum_{k\in \Z} \rho_{\sqrt{\beta^2 + \delta^2}} \Paren{\gamma \bm y_1\,; k-c}  \,.
    \end{align*}
    Hence, the distribution is indeed equal to $\hclwed{\bm w}{\sqrt{\beta^2 + \delta^2}}{\gamma}{c}$.
    It also follows, that the probability that we accept a given CLWE sample is equal to 
    \begin{align*}
        \int_{\R^n} p'(\bm y) \,d \bm y &=  \frac{\delta}{\sqrt{\beta^2 + \delta^2} \cdot M} \cdot \frac{\sqrt{\beta^2 + \delta^2}}{\sqrt{\beta^2 + \delta^2 + \gamma^2}} \cdot \rho\Paren{\tfrac{1}{\sqrt{\beta^2 + \delta^2 + \gamma^2}} \Z} \\
        &= \frac{\delta}{\sqrt{\beta^2 + \delta^2 + \gamma^2} \cdot M} \cdot \rho\Paren{\tfrac{1}{\sqrt{\beta^2 + \delta^2 + \gamma^2}} \Z} \\
        &= \frac{\delta}{\sqrt{\beta^2 + \delta^2 + \gamma^2} \cdot M} \cdot \rho_{\sqrt{\beta^2 + \delta^2 + \gamma^2}}\Paren{\Z} 
    \end{align*}

    Note that using \cref{fact:poisson_summation} it follows that $$\rho_{\sqrt{\beta^2 + \delta^2 + \gamma^2}}\Paren{\Z} = \sqrt{\beta^2 + \delta^2 + \gamma^2} \cdot \rho_{1/\sqrt{\beta^2 + \delta^2 + \gamma^2}}\Paren{\Z} \geq \sqrt{\beta^2 + \delta^2 + \gamma^2}\,.$$
    Hence, the probability that we accept is at least $\delta/M$.
    Further, for each $z \in [0,1)$ we have that
    \begin{align*}
        g_0(z) &= \sum_{k \in \Z} \rho_\delta(z+k+c) \\
        &\leq 2 \cdot \sum_{k=0}^\infty \rho_\delta(k) \\
        &< 2 \cdot \sum_{k=0}^\infty \exp \Paren{-\pi k} < 4 \,.
    \end{align*}
    Hence, $M \leq 4$ and it follows that we accept with probability at least $\delta/4$.
    Thus, after $\poly(n, 1/\delta)$ we output at least one HCLWE sample with probability at least $1-\exp(-\poly(n, 1/\delta))$.

    Lastly, when given samples from $N\Paren{0, \tfrac 1 {2\pi} \cdot I_n} \times \cU\Paren{[0,1)}$ the procedure will output samples from $N\Paren{0, \tfrac 1 {2\pi} \cdot I_n}$ since in this case $\bm y$ and $z$ are independent.
\end{proof}

\newpage

\phantomsection
\addcontentsline{toc}{section}{References}
\bibliographystyle{amsalpha}
\bibliography{bib/custom,bib/dblp,bib/mathreview,bib/scholar}

\appendix

\section{Missing Lemmas}
\label{sec:missing_lemmas}

\subsection{TVD Closeness of Supporting Distributions}

\begin{lemma}
    \label{lem:tvd_hclwe_and_nhclwe}
    Let $\bm w \in \R^d$ and $0 \leq \beta \leq \gamma, c \in [0,1)$.
    Then $$\TVD{\hclwed{\bm w}{\beta}{\gamma}{c}}{\nhclwed{\bm w}{\beta}{\gamma}{c}} \leq 4 \cdot \exp \Paren{-\frac 1 {100 \beta^2}}\,.$$
\end{lemma}

\begin{proof}
    Let $P_H$ denote the density of $\hclwed{\bm w}{\beta}{\gamma}{c}$ and $P_N$ the density of $\nhclwed{\bm w}{\beta}{\gamma}{c}$.
    Abusing notation slightly, we also use $P_H$ and $P_N$ to refer to the marginal of $P_H$ and $P_N$ on the span of $\bm w$.
    Since the densities agree in the space orthogonal to $\bm w$, and since they factorize over these two spaces, we obtain
    \begin{align*}
        \TVD{P_H}{P_N} = \;\sup_{\mathclap{A \sse \Span(\bm w)}} \;\Abs{P_H(A) - P_N(A)} \,.
    \end{align*}
    For ease of notation, we identify the span of $w$ with the real line.
    Further, for $k \in \Z$, let $P_{H,k}$ and $P_{N,k}$ denote the density of the $k$-th component of $P_H$ and $P_N$ respectively.
    Further, let $w_k$ denote the weight of the $k$-th component - which is the same in both cases.
    It follows that
    \begin{align*}
        \TVD{P_H}{P_N} &= \sup_{A \sse \R} \,\Abs{P_H(A) - P_N(A)} = \sup_{A \sse \R} \,\Abs{\sum_{k \in \Z} w_k \Paren{P_{H,k}(A) - P_{N,k}(A)}} \\
        &\leq \sum_{k \in \Z} w_k \cdot \sup_{A \sse \R} \,\Abs{P_{H,k}(A) - P_{N,k}(A)}
    \end{align*}
    Next, fix $k \in \Z$ and let $I_k$ denote the support of $P_{N,k}$.
    Let $Z = P_{H,k}(I_k)$ and recall that for $C \subseteq I_k$ it holds that $P_{N,k}(C) = \tfrac 1 Z \cdot P_{H,k}(C)$ and for $C$ disjoint from $I_k$ that $P_{N,k}(C) = 0$.
    We can then bound
    \begin{align*}
        \sup_{A \sse \R} \,\Abs{P_{H,k}(A) - P_{N,k}(A)} &= \sup_{A \sse \R}\, \Abs{P_{H,k}(A \cap I_k) + P_{H,k}(A \cap I_k^c) - P_{N,k}(A \cap I_k) - P_{N,k}(A \cap I_k^c)} \\
        &\leq \sup_{A \sse \R} \,\Abs{P_{H,k}(A \cap I_k)  - P_{N,k}(A \cap I_k)} + \sup_{A \sse \R} \,\Abs{P_{H,k}(A \cap I_k^c) - P_{H,k}(A \cap I_k^c)} \\
        &= \frac{1-Z}{Z} \cdot P_{H,k}(I_k) + P_{H,k}(I_k^c) = 2 \cdot \Paren{1-Z} \,.
    \end{align*}

    Let $\mu_k = \tfrac{\gamma}{\beta^2 + \gamma^2} (k - c)$ and $\sigma_k^2 = \tfrac 1 {2\pi} \cdot \tfrac{\beta^2}{\beta^2 + \gamma^2}$ and denote by $X_k$ the random variable distributed according to $P_{H,k}$.
    Note that $X_k \sim N(\mu_k, \sigma_k^2)$ and $I_k = [\mu_k - \alpha, \mu_k - \alpha]$, where $\alpha = \tfrac 1 {10} \cdot \tfrac {\gamma}{\gamma^2 + \beta^2}$.
    It follows that
    \begin{align*}
        1-Z = \Psymb\Paren{\Abs{X_k} \geq \alpha} &\leq 2 \cdot \exp \Paren{- \frac{\alpha^2}{2\sigma_k^2}} = 2 \cdot \exp\Paren{-\frac{2 \pi \cdot \gamma^2 \cdot \Paren{\beta^2 + \gamma^2}}{200 \beta^2 \cdot \Paren{\beta^2 + \gamma^2}^2}} \leq 2 \cdot \exp\Paren{- \frac{ \gamma^2}{50\beta^2\cdot \Paren{\beta^2 + \gamma^2}}} \\
        & \leq2 \cdot \exp\Paren{- \frac{1}{100\beta^2}}\,.
    \end{align*}

    Hence, we finally obtain that $$\TVD{P_H}{P_N} \leq \Paren{\sum_{k \in \Z} w_k} \cdot 4 \cdot \exp\Paren{- \frac{1}{100\beta^2}} = 4 \cdot \exp\Paren{- \frac{1}{100\beta^2}}\,.$$
\end{proof}

\begin{lemma}
    \label{lem:tvd_distribution_specific}
    Let $D,D'$ be distributions over $\R^M \times \Set{-1,+1}$ with densities defined below, then $\TVD{D}{D'} \leq \tfrac{1}{\poly(M)}$.
    The densities are equal to
    \begin{align*}
        p_D(\bm x, y) &=  \frac 1 \beta \rho(\bm x) \cdot \begin{cases}
            \sum_{k \in \Z} \int_0^{1/2} \rho_{\beta}\Paren{ c +k - \gamma \iprod{\bm w, \bm x}} \, dc\,, &\quad \text{if $y = +1$,} \\
            \sum_{k \in \Z} \int_{1/2}^1 \rho_{\beta}\Paren{ c +k - \gamma \iprod{\bm w, \bm x}} \, dc\,, &\quad \text{if $y = -1$.} 
        \end{cases}
    \end{align*}
    and
    \begin{align*}
        p_{D'}(\bm x, y) &= \rho(\bm x) \cdot \begin{cases} \sum_{k \in \Z} \Ind\Paren{\gamma \iprod{\bm w, \bm x} \in [k, k+1/2)}\,, &\quad \text{if $y = +1$,} \\
            \sum_{k \in \Z} \Ind\Paren{\gamma \iprod{\bm w, \bm x} \in [k+1/2, k+1)} \,, &\quad \text{if $y = -1$.} 
        \end{cases}
    \end{align*}
\end{lemma}

\begin{proof}
    The proof proceeds similary to \cref{lem:tvd_hclwe_and_nhclwe}.
    First, note that by symmetry
    \begin{align*}
        \TVD{D}{D'} &= \max\Set{\sup_{A \sse \R^M} \Abs{\Psymb_D(\bm x \in A, y = \ell) - \Psymb_{D'}(\bm x \in A, y = \ell)} \suchthat \ell \in \Set{-1,+1}} \\
        &= \sup_{A \sse \R^M} \Abs{\Psymb_D(\bm x \in A, y = +1) - \Psymb_{D'}(\bm x \in A, y = +1)} \\
        &= \sup_{A \sse \Span(\bm w)} \Abs{\Psymb_D(\iprod{\bm w, \bm x} \in A, y = +1) - \Psymb_{D'}(\iprod{\bm w, \bm x} \in A, y = +1)}\,.
    \end{align*}
    Without loss of generality, identfiy the span of $\bm w$ with the real line.
    Abusing notation, we denote the one-dimensional densities by $p_D$ and $p_{D'}$ as well.
    Define $I_k(z) \coloneqq \int_0^{1/2} \rho_{\beta}\Paren{ c +k - \gamma z} \, dc$ and observe that
    \begin{align*}
        p_D(z) = \frac{1}{\beta} \rho(z) \cdot \sum_{k \in \Z} I_k(z) \,, \quad&&\quad
        p_{D'}(z) = \rho(z) \cdot \sum_{k \in \Z} \Ind\Paren{\gamma z \in [k,k+1/2]} \,.
    \end{align*}
    It follows that $\TVD{D}{D'} = \TVD{p_D}{p_{D'}}$ (referring to the one-dimensional densities).
    To bound this quantity, we introduce the following intermediate distribution $\tilde{D}$ defined as follows:
    For $z \in \R$ let $k^*(z) = \argmin_{k \in \Z} \min\Set{\abs{\gamma z - k}, \abs{\gamma z - k - 1/2}}$, then we set 
    \[
        p_{\tilde{D}}(z) \propto \frac{1}{\beta} \rho(z) \cdot I_{k^*(z)}(z) \,.
    \]
    We will first show that $\TVD{p_D}{p_{\tilde{D}}} \leq \exp\Paren{-\poly(M)}$ and second that $\TVD{p_{\tilde{D}}}{p_{D'}} \leq \tfrac{1}{\poly(M)}$ which together imply the desired result.
    We will use that for measure $P,Q$ it holds that $\TVD{P}{Q} \leq \sqrt{2} \HD{P}{Q}$, where $\HD{P}{Q}$ is the Hellinger distance defined as $\HD{P}{Q} = \tfrac 1 2 \int \Paren{\sqrt{p(x)} - \sqrt{q(x)}}^2 \, dx$ for $p,q$ the densitites of the measures $P$ and $Q$ respectively.
    
    Let $Z$ be the normalization constant in the density $p_{\tilde{D}}$, it follows that 
    \[
        Z = \int_{\R} \frac{1}{\beta} \rho(z) \cdot I_{k^*(z)}(z) \, dz = 1 - \int_{\R} \frac{1}{\beta} \rho(z) \cdot \sum_{k \neq k^*(z)} I_k(z) \, dz \,.
    \]
    Note that for a given $z$ and $k \neq k^*(z)$ it holds that $\gamma z$ is at distance at least $\tfrac{\abs{k^*(z) - k}}{4}$ from the interval $[k, k+1/2]$.
    Hence, we can bound $I_k(z)$ as follows:
    \[
        I_k(z) = \int_0^{1/2} \rho_{\beta}\Paren{ c +k - \gamma z} \, dc \leq \frac 1 2 \rho_\beta \Paren{\tfrac {\abs{k^*(z) - k}} 4} = \exp\Paren{-\Paren{k^*(z) - k}^2\poly(n)} \,.
    \]
    It follows that
    \[
        \sum_{k \neq k^*(z)} I_k(z) \leq \sum_{k \geq 1} \exp\Paren{- k \cdot \poly(n)} \leq \exp\Paren{-\poly(n)} \,,  
    \]
    implying that $Z \geq 1 - \exp\Paren{-\poly(n)}$.
    Using this we can bound the Hellinger distance
    \begin{align*}
        \HD{D}{\tilde{D}} &= \frac 1 {2\beta} \int_{\R} \rho(z) \Paren{\sqrt{\frac{I_{k^*(z)}(z)}{Z}} - \sqrt{\sum_{k \in \Z} I_k(z)}}^2 \, dz \\
        &\leq \frac{1}{\beta} \Brac{\int_{\R} \rho(z) \Paren{\sqrt{\frac{I_{k^*(z)}(z)}{Z}} - \sqrt{I_{k^*(z)}(z)}}^2 \, dz + \int_{\R} \rho(z) \Paren{\sqrt{I_{k^*(z)}(z)} - \sqrt{\sum_{k \in z} I_k(z)}}^2 \, dz } \\
        &\leq \exp\Paren{-\poly(n)} \,,
    \end{align*}
    where in the last inequality we used that $\int_{\R} \rho(z) \, dz = 1$ and $I_{k^*(z)}(z) \leq 1$ for all $z$.

    Next, we turn to bound $\HD{\tilde{D}}{D'}$.
    To this end, let $\tau > 0$ be some parameter to be chosen later and note that $\sum_{k \in \Z} \Ind\Paren{\gamma z \in [k,k+1/2]} = \Ind\Paren{\gamma z \in [k^*(z),k^*(z)+1/2]}$.
    We obtain
    \begin{align*}
        \HD{\tilde{D}}{D'} &= \frac{1}{2} \int_{\R} \rho(z)\Paren{\sqrt{\frac 1 \beta I_{k^*(z)}(z)} - \Ind\Paren{\gamma z \in [k^*(z),k^*(z)+1/2]}}^2 \, dc
    \end{align*}
    To begin with, note that $$\frac 1 \beta I_{k^*(z)}(z) \leq \int_{\R} \frac 1 \beta \rho_\beta\Paren{c+k-\gamma z} \, dc \leq 1 \,.$$
    We proceed by making a case distinction.
    First, consider $z$ such that $\gamma z \in [k^*(z)+\tau,k^*(z)+1/2-\tau]$ and let $X \sim N(0, \tfrac \beta {2\pi})$.
    For such $z$ it holds that
    \begin{align*}
        \frac 1 \beta I_{k^*(z)}(z) &= \int_0^{1/2} \frac 1 \beta \rho_{\beta}\Paren{ c + k^*(z) - \gamma z} \, dc = \Psymb\Paren{k^*(z) - \gamma z \leq X \leq k^*(z) + \tfrac 1 2 - \gamma z} \\
        &\geq 1 - \Psymb\Paren{\Abs{X} \geq \tau} \geq 1 - 2 \exp\Paren{-\tfrac{\pi \tau^2}{\beta^2}} \,.
    \end{align*}
    Next, consider $z$ such that $\min\Set{\abs{\gamma z - k^*(z)}, \abs{\gamma z - k^*(z) - \tfrac 1  2}} \geq \tau$.
    We obtain that
    \[
        \frac 1 \beta I_{k^*(z)}(z) \leq \Psymb\Paren{\Abs{X} \geq \tau} \leq 2 \exp\Paren{-\tfrac{\pi \tau^2}{\beta^2}} \,.
    \]
    Let $S = \bigcup_{k \in \Z} [k-\tau, k+\tau] \cup [k+ \tfrac 1 2 - \tau, k + \tfrac 1 2 + \tau]$.
    Using the above, we can bound
    \[
        \frac 1 2 \int_{\R \setminus S} \rho(z)\Paren{\sqrt{\frac 1 \beta I_{k^*(z)}(z)} - \Ind\Paren{\gamma z \in [k^*(z),k^*(z)+1/2]}}^2 \, dc \leq \exp\Paren{-\frac{\pi \tau^2}{\beta^2}} \,.
    \]
    It remains to bound the integral on $S$.
    For this, note that $\rho$ is symmetric around $z = 0$ and monotone for $z \geq 0$ and $z \leq 0$.
    This yields
    \begin{align*}
        &\frac{1}{2} \int_S \rho(z)\Paren{\sqrt{\frac 1 \beta I_{k^*(z)}(z)} - \Ind\Paren{\gamma z \in [k^*(z),k^*(z)+1/2]}}^2 \, dc \\
        &\leq 4 \Brac{\int_{0}^{\tau} \rho(z) \, dz + \int_{1/2 -\tau}^{1/2 + \tau} \rho(z) \, dz  + \sum_{k \geq 1} \int_{k-\tau}^{k+\tau} \rho(z) \, dz + \int_{k + 1/2 -\tau}^{k+ 1/2 + \tau} \rho(z) \, dz }\\
        &\leq 4 \Brac{\int_{0}^\tau \rho(z) \, dz + \frac{1}{\tfrac 1 {4\tau} -1} \int_\tau^{1/2 + \tau} \rho(z) \, dz + \frac{1}{\tfrac 1 {4\tau} -1} \sum_{k \geq 1} \int_{k-1/2 + \tau}^{k + \tau} \rho(z) \, dz + \int_{k+\tau}^{k+1/2+\tau} \rho(z) \, dz } \\
        &\leq 12 \tau \int_0^\infty \rho(z) \, dz = 6 \tau \,.
    \end{align*}
    Hence, combining the above bounds and choosing $\tau = \sqrt{\beta} = \tfrac 1 {\poly(M)}$, we obtain
    \[
        \HD{\tilde{D}}{D'} \leq \exp\Paren{-\frac{\pi \tau^2}{\beta^2}} + 6\tau = \exp\Paren{-\poly(M)} + \frac{1}{\poly(M)}
    \]
    as desired.

\end{proof}

\subsection{Supporting Lemmas about Optimal Halfspaces}

\begin{lemma}
    \label{lem:distribution_specific_optimal_ltf}
    Consider the distribution $D'$ over $\R^M \times \Set{-1,1}$ with density given by
    \begin{align*}
        p_{D'}(\bm x, y) &= \rho(\bm x) \cdot \begin{cases} \sum_{k \in \Z} \Ind\Paren{\gamma \iprod{\bm w, \bm x} \in [k, k+1/2)}\,, &\quad \text{if $y = +1$,} \\
            \sum_{k \in \Z} \Ind\Paren{\gamma \iprod{\bm w, \bm x} \in [k+1/2, k+1)} \,, &\quad \text{if $y = -1$.} 
        \end{cases}
    \end{align*}
    Let $f^*(\bm x) = \sign\Paren{\iprod{\bm w, \bm x}}$, then there exists an absolute constant $c> 0$ such that $\err_{D'}\Paren{f^*} \leq \tfrac 1 2 - \tfrac c \gamma$.

    Further, for each $\ell \geq 2$ there exists a degree-$\ell$ PTF $h^*$ such that $\err_{D'} \Paren{h^*} \leq \tfrac 1 2 - \tfrac {c' \ell} \gamma$, for some absolute constant $c' > 0$.
\end{lemma}
\begin{proof}
    First note, that for $(\bm x, y) \sim D'$ only depends on $\iprod{\bm w, \bm x}$.
    Let $z = \iprod{\bm w, \bm x}$ and $A_k = [\tfrac k \gamma, \tfrac{k+1/2} \gamma], B_k = [\tfrac{k+1/2} \gamma, \tfrac {k+1} \gamma]$ for $k \in \Z$.
    Further, let $X \sim N(0, \tfrac 1 {2 \pi})$.

    We first prove the result about linear threshold functions.
    By symmetry it holds that
    \begin{align*}
        \err_{D'}\Paren{f^*} &= 2 \Psymb\Paren{f^*(\bm x) \neq y, y = -1} = 2 \sum_{k \in \Z} \int_{B_k} \Ind\Paren{f^*(z) \neq -1} \rho(z) \, dz = 2 \sum_{k \geq 0} \Psymb\Paren{X \in B_k}
    \end{align*}
    Note that by construction, for $k \geq 0$, it holds that $2\Psymb\Paren{X \in B_k} \leq \Psymb\Paren{X \in B_k} + \Psymb\Paren{X \in A_k}$.
    Let $k^*$ be a non-negative integer to be chosen later and assume that there exists $\e = \e(k^*)$ such that $2\Psymb\Paren{X \in B_k} \leq (1-\e) \Brac{\Psymb\Paren{X \in B_k} + \Psymb\Paren{X \in A_k}}$, then for $k^* \geq \gamma$
    \begin{align*}
        2\sum_{k \geq 0} \Psymb\Paren{X \in B_k} &\leq \sum_{0 \leq k < k^*} \Psymb\Paren{X \in B_k} + \Psymb\Paren{X \in A_k} + \Paren{1-\e} \cdot \sum_{k \geq k^*} \Psymb\Paren{X \in B_k} + \Psymb\Paren{X \in A_k} \\
        &= \frac 1 2 - \Psymb\Paren{X \geq \frac {k^*} \gamma} + \Paren{1-\e} \cdot \Psymb\Paren{X \geq \frac {k^*} \gamma} = \frac 1 2 - \e \cdot \Psymb\Paren{X \geq \frac {k^*} \gamma} \\
        &\leq \frac 1 2 - \e \cdot \frac{k^*/\gamma}{2 \pi \Paren{k^*/\gamma}^2 + 1}\exp\Paren{-\frac{\pi \cdot (k^*)^2}{\gamma^2}} \\
        &\leq \frac 1 2 -  \e \cdot \frac \gamma {4 \pi k^*} \exp\Paren{-\frac{\pi \cdot (k^*)^2}{\gamma^2}} \,,
    \end{align*}
    where we also used standard bounds for the pdf of the standard Gaussian distribution.
    Next, we aim to find $\e$ and calculate
    \begin{align*}
        \Psymb\Paren{X \in A_k} - \Psymb\Paren{X \in B_k} &= \int_{k/\gamma}^{(k+1/2)/\gamma} \exp\Paren{-\pi z^2} \, dz - \int_{(k+1/2)/\gamma}^{(k+1)/\gamma} \exp\Paren{-\pi z^2} \, dz \\
        &= \int_{(k+1/2)/\gamma}^{(k+1)/\gamma} \Brac{\exp\Paren{-\pi (z -1/(2\gamma))^2} - \exp\Paren{-\pi z^2}} \, dz \\
        &= \int_{(k+1/2)/\gamma}^{(k+1)/\gamma} \exp\Paren{-\pi z^2} \cdot \Brac{\exp\Paren{\pi \frac z \gamma} \exp\Paren{- \frac{\pi}{4\gamma^2}} - 1 } \, dz
    \end{align*}
    For $k \geq 1$ we can bound
    \begin{align*}
        \exp\Paren{\pi \frac z \gamma} \exp\Paren{- \frac{\pi}{4\gamma^2}} &\geq \Paren{1 + \pi \frac z \gamma} \Paren{1 - \frac{\pi}{4 \gamma^2}} = 1 + \pi \frac{z}{\gamma} - \frac{\pi}{4 \gamma^2} - \pi^2 \frac z {4\gamma^3} \\
        &\geq 1 + \pi \frac z {2\gamma} - \frac{\pi}{4 \gamma^2} \geq 1 + \pi \frac k {4 \gamma^2} \,.
    \end{align*}
    Which implies
    \[
        \Psymb\Paren{X \in A_k} - \Psymb\Paren{X \in B_k} \geq \pi \frac k {4 \gamma^2} \Psymb\Paren{X \in B_k} \,.
    \]
    Rearringing implies that
    \[
        \Psymb\Paren{X \in B_k} \leq \frac{2}{2+\pi \tfrac k {4 \gamma^2}} \cdot \Paren{\Psymb\Paren{X \in B_k} + \Psymb\Paren{X \in A_k}}
    \]
    which yields $\e(k) \geq \pi \frac k {12 \gamma^2} \geq \frac k {4 \gamma^2}$.
    Hence, for $k^* = \lceil \gamma \rceil$ we have
    \[
        \e(k^*) \cdot \frac \gamma {4 \pi k^*} \exp\Paren{-\frac{\pi \cdot (k^*)^2}{\gamma^2}} \geq \frac 1 {4\gamma} \cdot \frac 1 {4 \pi} \cdot \exp\Paren{-4\pi} \,.
    \]
    As desired, this implies that for an absolute constant $c > 0$ it holds that
    \[
        \err_{D'}\Paren{f^*} \leq \frac 1 2 - \frac{c}{\gamma} \,.
    \]

    Next, we prove the result about degree-$\ell$ PTFs.
    Again, since the labels of $D'$ only depend on the direction $\bm w$ it suffices to define a one-dimensional degree-$\ell$ polynomial $p_z \from \R \to \R$.
    The final PTF will be defined as $h^*(\bm x) = \sign{p_z(\iprod{\bm w, \bm x})}$.
    Note that $p_z$ can be fully specified by $\ell$ roots and the sign it takes between any two roots.
    For simplicity, we assume that $\ell$ is odd and consider degree-$2\ell +1$ PTFs, the even case works analogously.
    Let $p_z$ be the polynomial that has roots $-\tfrac \ell {2 \gamma}, - \tfrac {\ell-1} {2\gamma}, \ldots, 0, \ldots, \tfrac {\ell - 1} {2\gamma}, \tfrac \ell {2\gamma}$.
    Further, let its sign be positive between 0 and $\tfrac 1 {2\gamma}$ and alternate on the other intervals.
    Again, for simplicity and without loss of generality, also assume that its sign after the greatest positive root is positive.
    Observe that this implies that $\ell$ is even.
    Note that for $\ell = 0$ we recover the LTF from above.
    Let $c > 0$ be some absolute constant.
    By symmetry and the results above it follows that, note that we use that $h^*$ agrees with the label of all samples $(\bm x, y)$ such that $\iprod{\bm w, \bm x} \leq 0$ and $y = -1$. 
    \begin{align*}
        \err_{D'}\Paren{f^*} &= 2 \Psymb\Paren{f^*(\bm x) \neq y, y = -1} = 2 \sum_{k \in \Z} \int_{B_k} \Ind\Paren{f^*(z) \neq -1} \rho(z) \, dz = 2 \sum_{k \geq \ell/2} \Psymb\Paren{X \in B_k} \\
        &= 2 \sum_{k \geq 0} \Psymb\Paren{X \in B_k} - 2 \sum_{k < \ell/2} \Psymb\Paren{X \in B_k} \leq \frac 1 2 - \frac c \gamma - \Psymb\Paren{\frac 1 \gamma \leq X \leq \frac{\ell/2 + 1}{\gamma}} \\&
        \leq \frac 1 2 - \frac c \gamma - \frac 1 2 \cdot \Psymb\Paren{X \leq \frac {\ell + 1} {2\gamma}} \,.
    \end{align*}
    Using that $\ell / \gamma \leq 1$ we bound as before
    \begin{align*}
        \tfrac 1 2 \cdot \Psymb\Paren{X \leq \frac {\ell + 1} {2\gamma}} &\geq \frac{(\ell + 1)/\gamma}{ 4 \pi \Paren{(\ell +1 )/ \gamma }^2 + 1} \exp\Paren{- \frac{\pi \cdot (\ell+1)^2}{\gamma^2}} \\
        &\geq \frac{\ell + 1}{ 36 \gamma} \exp\Paren{- \pi} \,.
    \end{align*}
    Hence, there exists an absolute constant $c' > 0$ such that
    \[
        \err_{D'}\Paren{h^*} \leq \frac 1 2 - \frac {c' \ell} \gamma \,. 
    \]
\end{proof}

Next, we prove \cref{lem:hard_ptf_implies_hard_ltf}.
\restatelemma{lem:hard_ptf_implies_hard_ltf}

\begin{proof}
    We start by describing the mapping $\phi$.
    Denote by $\alpha = \Paren{\alpha_1, \ldots, \alpha_n} \in \N^n$ a multi-index and by $\abs{\alpha} = \sum_{i=1}^n \alpha_i$ its size.
    Let $M' = \binom{n+d}{n}$ and let
    \begin{align*}
        \phi \colon \R^n &\rightarrow \R^M\,, \\
        \bm x &\mapsto \Paren{\Paren{{\bm x}^\alpha}_{\abs{\alpha}\leq d}, \bm 0} \,,
    \end{align*}
    where by $\bm 0$ we mean the vector containing $M - M'$ zeros.
    Define the distribution $D'$ over $\R^M \times \Set{-1,+1}$ as first drawing $(\bm x,y)\sim D$ and then outputting $\Paren{\phi(\bm x),y}$.
    Clearly, restricted to the support of $D$, the map $\phi$ is a bijection between $\supp\Paren{D}$ and $\supp\Paren{D'}$.

    Next, consider any degree-$d$ polynomial threshold function $h \colon \R^n \rightarrow \Set{-1,+1}$.
    Since $M' = \binom{n+d}{n} \leq M$ there exists a linear threshold function $f$ such that for all $\bm x \in \R^n$ it holds that $h(\bm x) = f(\phi(\bm x))$.
    It follows that $$\err_{D'}\Paren{f'} = \Psymb_{(\bm x',y') \sim D'} \Paren{f(\bm x') \neq y'} = \Psymb_{(\bm x,y) \sim D} \Paren{f(\phi(\bm x)) \neq y} = \Psymb_{(\bm x,y) \sim D} \Paren{h(\bm x) \neq y} = \err_D\Paren{f}\,.$$
    Similarly, for every binary function $f \colon \supp\Paren{D'} \rightarrow \Set{-1,+1}$ we can define the binary function $h \colon \R^n \rightarrow \Set{-1,+1}$ such that $h \Paren{\bm x} = f\Paren{\phi\Paren{\bm x}}$.
    Hence, we have $$\err_{D'}\Paren{f'} = \err_D\Paren{f}\,.$$
    Since in both cases we have to consider at most $M$ coefficients we can compute the linear/polynomial threshold function in time $\poly\Paren{M}$.
    Moreover, in both cases, for $\bm {\tilde{x}'} \in \supp\Paren{D'}$ it holds that 
    \begin{align*}
        \Psymb_{(\bm x',y') \sim D'} \Paren{f(\bm x') \neq y' \suchthat \bm x' = \bm {\tilde{x}'}} &= \Psymb_{(\bm x,y) \sim D} \Paren{f(\phi\Paren{\bm x}) \neq y \suchthat \phi\Paren{\bm x} = \bm {\tilde{x}'}} \\
        &= \Psymb_{(\bm x,y) \sim D} \Paren{h(\bm x) \neq y \suchthat \bm x = \phi^{-1}\Paren{\bm {\tilde{x}'}}} \,,
    \end{align*}
    as desired.

\end{proof}

\subsection{Small Facts}

\begin{lemma}
    \label{lem:small_tvd_advantage}
    Let $n \in \N, \e > 0$ and distributions $D_n^0$ and $D_n^1$ be such that there exists no $T$-time distinguisher with advatage at least $\e$ between $D_n^0$ and $D_n^1$.
    Further, let $D_n^{1'}$ be a third distribution such that $\TVD{D_n^1}{D_n^{1'}} = \negl(n)$.
    Then there exists no $T$-time distingiusher with advantage at least $\e - \negl(n)$ between $D_n^0$ and $D_n^{1'}$.
\end{lemma}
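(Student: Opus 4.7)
The plan is a one-line reduction via the triangle inequality for total variation distance. I will argue that any $T$-time algorithm $\cA$ has essentially the same distinguishing advantage between $(D_n^0, D_n^1)$ as between $(D_n^0, D_n^{1'})$, up to a negligible additive term, so that hardness transfers from the original pair to the perturbed one.

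Concretely, I proceed by contradiction. Suppose some $T$-time distinguisher $\cA$ achieves advantage at least $\e - \negl(n)$ on $(D_n^0, D_n^{1'})$. View the same algorithm $\cA$ as a distinguisher on the pair $(D_n^0, D_n^1)$. By the triangle inequality applied to the three scalars $\Pr_{D_n^0}[\cA(x) = 0]$, $\Pr_{D_n^1}[\cA(x) = 0]$, and $\Pr_{D_n^{1'}}[\cA(x) = 0]$, the two advantages differ by at most
$$\bigl|\Pr_{x \sim D_n^1}[\cA(x) = 0] - \Pr_{x \sim D_n^{1'}}[\cA(x) = 0]\bigr|\,.$$
The right-hand side is just the difference of the probabilities of the single event $A = \{x : \cA(x) = 0\}$ under $D_n^1$ and $D_n^{1'}$, so by definition of total variation distance
$$\bigl|\Pr_{x \sim D_n^1}[\cA(x) = 0] - \Pr_{x \sim D_n^{1'}}[\cA(x) = 0]\bigr| \;\leq\; \TVD{D_n^1}{D_n^{1'}} \;=\; \negl(n)\,.$$

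Combining these two bounds, $\cA$ achieves advantage at least $\e - \negl(n) - \negl(n)$ on $(D_n^0, D_n^1)$, which after absorbing the two negligible shifts into a single $\negl(n)$ term contradicts the hypothesis that no $T$-time distinguisher attains advantage $\geq \e$ on the original pair. There is no nontrivial step: the entire content is that total variation distance upper-bounds the change in the probability of any single event (in particular the acceptance event of $\cA$), and hence upper-bounds the drift in any $T$-time distinguisher's advantage. The only bookkeeping subtlety is the standard convention that a sum of two negligible functions is again negligible, so the stated loss of $\negl(n)$ in the conclusion is achieved.
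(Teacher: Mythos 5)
Your proof is essentially the paper's own argument: proceed by contradiction, run the hypothetical distinguisher $\cA$ on the other pair of distributions, and observe that the two advantages differ by at most $\bigl|\Pr_{D_n^1}[\cA(x)=0] - \Pr_{D_n^{1'}}[\cA(x)=0]\bigr| \le \TVD{D_n^1}{D_n^{1'}} = \negl(n)$, since the acceptance set of $\cA$ is a single event. The paper's displayed inequality contains a sign typo (it writes ``$\ge \ldots + \negl(n)$'' rather than ``$\ge \ldots - \negl(n)$''), and your version states the triangle-inequality step more carefully.

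One small arithmetic point worth flagging, which your writeup and the paper both gloss over: after the TVD loss, $\cA$ achieves advantage at least $\e - 2\negl(n)$ against $(D_n^0, D_n^1)$, which is \emph{strictly less than} $\e$, so it does not literally contradict the hypothesis ``no $T$-time distinguisher with advantage at least $\e$.'' The clean version of the lemma has conclusion ``no $T$-time distinguisher with advantage at least $\e + \negl(n)$'' (equivalently, run the contrapositive: a distinguisher with advantage $\ge \e + \negl(n)$ against the perturbed pair yields one with advantage $\ge \e$ against the original). In the paper's applications $\e$ is inverse-polynomial and only ``non-negligible advantage'' is ultimately needed, so the sign is immaterial there, but as stated the step ``absorbing the two negligible shifts contradicts the hypothesis'' does not quite close.
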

\begin{proof}
    Suppose there exists a distinguisher $\cA$ between $D_n^0$ and $D_n^{1'}$ with advantage at least $\e - \negl(n)$.
    Using this distinguisher to distinguish between $D_n^0$ and $D_n^1$ gives advantage
    \begin{align*}
        \Abs{\Psymb_{x \sim D_n^0} \Paren{\cA(x) = 0} - \Psymb_{x \sim D_n^1} \Paren{\cA(x) = 0}} \geq \Abs{\Psymb_{x \sim D_n^0} \Paren{\cA(x) = 0} - \Psymb_{x \sim D_n^{1'}} \Paren{\cA(x) = 0}} + \negl(n) \geq \e
    \end{align*}
    which is a contradiction.
\end{proof}

\begin{fact}[Poisson Summation Formula]
    \label{fact:poisson_summation}
    For any lattice $L$ and any function $f$ it holds that $$f(L) = \det\Paren{L^*} \cdot \hat{f}\Paren{L^*}$$ where $L^* = \Set{\bm y \in \R^n \suchthat \iprod{\bm x, \bm y} \in \Z \text{ for all } \bm x \in \Z}$ is the dual lattice of $L$ and $\hat{f}$ the Fourier transform of $f$.
\end{fact}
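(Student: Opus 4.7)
The plan is to prove this by the classical periodization argument from Fourier analysis. For all applications in this paper, $f$ is a Gaussian $\rho_s$ (which is Schwartz and self-dual under the Fourier transform up to scaling), so every sum and integral that appears is absolutely convergent and all interchanges are valid; I will not further dwell on regularity.

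First, I would fix a fundamental domain $D$ of $L$ and consider the $L$-periodization
$$F(\bm x) \coloneqq \sum_{\bm v \in L} f(\bm x + \bm v)\,.$$
This is smooth and $L$-periodic, hence admits a Fourier expansion on the torus $\R^n/L$. The characters of this torus are exactly the maps $\bm x \mapsto e^{2\pi i \iprod{\bm w, \bm x}}$ for $\bm w \in L^*$, yielding
$$F(\bm x) = \sum_{\bm w \in L^*} c_{\bm w}\, e^{2\pi i \iprod{\bm w, \bm x}}\,, \qquad c_{\bm w} = \frac{1}{\vol(D)} \int_D F(\bm x)\, e^{-2\pi i \iprod{\bm w, \bm x}} \,d\bm x\,.$$

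Next, I would compute the Fourier coefficients by the standard unfolding trick, which is the only step where the duality between $L$ and $L^*$ is actually used. Since $\iprod{\bm w, \bm v} \in \Z$ for every $\bm w \in L^*$ and $\bm v \in L$, the exponential weight is $L$-invariant, so the sum over $\bm v \in L$ combined with the integral over $D$ telescopes into a single integral over all of $\R^n$:
$$c_{\bm w} = \frac{1}{\vol(D)} \int_{\R^n} f(\bm x)\, e^{-2\pi i \iprod{\bm w, \bm x}} \,d\bm x = \frac{\hat f(\bm w)}{\vol(D)}\,.$$

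Finally, using the standard identity $\vol(D) \cdot \det(L^*) = 1$ and evaluating the Fourier series at $\bm x = \bm 0$, I read off
$$f(L) = F(\bm 0) = \det(L^*) \sum_{\bm w \in L^*} \hat f(\bm w) = \det(L^*) \cdot \hat f(L^*)\,,$$
which is exactly the claimed identity. The only technical obstacle is justifying pointwise convergence of the Fourier series at $\bm 0$ together with the interchange of sum and integral in the unfolding step; both follow immediately from rapid decay of $f$ and $\hat f$ for Schwartz $f$, so there is no real difficulty for the (Gaussian) uses made in this paper.
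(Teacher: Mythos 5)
Your proof is correct. The paper states this as a classical \emph{fact} without supplying a proof, so there is nothing in the paper to compare against; what you give is the standard periodization argument (fold $f$ over $L$, expand the resulting $L$-periodic function in its Fourier series over $\R^n/L$ with frequencies in $L^*$, unfold the coefficient integral to all of $\R^n$, evaluate at $\bm 0$, and use $\vol(\R^n/L)\cdot\det(L^*)=1$), which is exactly the textbook route and handles the Gaussian $\rho_s$ used throughout the paper without any convergence issues. One cosmetic point: the paper's displayed definition of $L^*$ has a typo --- it should read ``for all $\bm x \in L$'', not ``for all $\bm x \in \Z$'' --- and your argument correctly uses the intended definition.
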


\begin{fact}[\cite{peikert_rho_fact}]
    \label{fact:rho_fact}
    For any $r_1, r_2 > 0$ and vectors $\bm x, \bm c_1, \bm c_2 \in \R^n$, let $r_0 = \sqrt{r_1^2 + r_2^2}, r_3 = \tfrac {r_1 r_2}{r_0}$, and $\bm c_3 = \tfrac {r_3^2}{r_1^2} \bm c_1 + \tfrac {r_3^2}{r_2^2} \bm c_2$. Then $$\rho_{r_1}\Paren{\bm x - \bm c_1} \cdot \rho_{r_2}\Paren{\bm x - \bm c_2} = \rho_{r_0} \Paren{c_1 - c_2} \cdot \rho_{r_3}\Paren{\bm x - \bm c_3} \,.$$
\end{fact}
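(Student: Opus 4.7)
Since $\rho_s(\bm u) = \exp(-\pi\snorm{\bm u}/s^2)$ is a Gaussian kernel, the claimed identity is equivalent to the quadratic identity
\[
\frac{\snorm{\bm x - \bm c_1}}{r_1^2} + \frac{\snorm{\bm x - \bm c_2}}{r_2^2}
\;=\;
\frac{\snorm{\bm c_1 - \bm c_2}}{r_0^2} + \frac{\snorm{\bm x - \bm c_3}}{r_3^2}
\]
after taking logarithms and dividing by $-\pi$. My plan is to verify this as an identity of polynomials in the variable $\bm x \in \R^n$: expand both sides, and check that the coefficients of $\snorm{\bm x}$, the linear term in $\bm x$, and the constant term agree.

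For the quadratic coefficient, I need $\tfrac{1}{r_1^2} + \tfrac{1}{r_2^2} = \tfrac{1}{r_3^2}$, which follows directly from the definition $r_3 = r_1 r_2 / r_0$ together with $r_0^2 = r_1^2 + r_2^2$. For the linear coefficient, I need $\tfrac{\bm c_1}{r_1^2} + \tfrac{\bm c_2}{r_2^2} = \tfrac{\bm c_3}{r_3^2}$, which is exactly the defining formula $\bm c_3 = \tfrac{r_3^2}{r_1^2}\bm c_1 + \tfrac{r_3^2}{r_2^2}\bm c_2$ after division by $r_3^2$.

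The only non-trivial part is matching the constant (i.e.\ $\bm x$-free) term. After substituting $\bm c_3 = r_3^2(\bm c_1/r_1^2 + \bm c_2/r_2^2)$, the right-hand-side constant expands as
\[
\frac{\snorm{\bm c_1 - \bm c_2}}{r_0^2} + r_3^2\Snorm{\tfrac{\bm c_1}{r_1^2} + \tfrac{\bm c_2}{r_2^2}}\,,
\]
and the task reduces to checking that the coefficients of $\snorm{\bm c_1}$, $\snorm{\bm c_2}$, and $\iprod{\bm c_1,\bm c_2}$ match those on the left. Using $r_3^2 = r_1^2 r_2^2/r_0^2$, the coefficient of $\snorm{\bm c_1}$ on the right is $\tfrac{1}{r_0^2} + \tfrac{r_2^2}{r_0^2 r_1^2} = \tfrac{r_1^2+r_2^2}{r_0^2 r_1^2} = \tfrac{1}{r_1^2}$, matching the left; by symmetry the $\snorm{\bm c_2}$-coefficients agree; and the cross-term coefficients cancel as $-\tfrac{2}{r_0^2} + \tfrac{2 r_3^2}{r_1^2 r_2^2} = 0$. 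This completes the verification.

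I do not expect any real obstacle here: the identity is purely algebraic and the ``completing-the-square'' structure is exactly what defines $r_3$ and $\bm c_3$. The only minor care needed is keeping track of the three constants and substituting $r_3^2 = r_1^2 r_2^2/r_0^2$ at the right moment to avoid ugly expressions.
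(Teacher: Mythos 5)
Your proof is correct, and the algebra checks out: the quadratic coefficient identity $1/r_1^2 + 1/r_2^2 = 1/r_3^2$, the linear identity $\bm c_1/r_1^2 + \bm c_2/r_2^2 = \bm c_3/r_3^2$, and the constant-term computation (where the cross terms in $\langle\bm c_1,\bm c_2\rangle$ cancel and the $\snorm{\bm c_i}$ coefficients collapse to $1/r_i^2$) all follow exactly as you describe. Note that the paper does not supply a proof of this fact at all — it is imported directly from the cited reference \cite{peikert_rho_fact} — so there is no in-paper argument to compare against; your completing-the-square verification is the standard one and would serve as a self-contained justification.
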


\begin{fact}
    \label{fact:eq_hclwe}
    Let $\gamma,\beta \geq 0$ and $\bm w \in \R^d$, then $$\sum_{k \in \Z} \rho_{\sqrt{\beta^2 + \gamma^2}}(k \,; c) \cdot \rho\Paren{\pi_{{\bm w}^\perp}(\bm y)} \cdot \rho_{\beta / \sqrt{\beta^2 + \gamma^2}}\Paren{\iprod{\bm w, \bm y} \,; \frac{\gamma}{\beta^2 + \gamma^2} (k - c)} = \rho(\bm y) \cdot \sum_{k \in \Z} \rho_{\beta} \Paren{\gamma \iprod{\bm w, \bm y} \,;  k - c} \,.$$
\end{fact}

\begin{proof}
    Clearly, for $\bm y$ orthogonal to $\bm w$ the equality holds.
    Consider any $\bm y$ in the span of $\bm w$ and for convenience write $z = \iprod{\bm y,\bm w}$.
    Fix $k \in \Z$ then we have that
    \begin{align*}
        \rho_{\sqrt{\beta^2 + \gamma^2}}(k \,; c) \cdot \rho_{\beta / \sqrt{\beta^2 + \gamma^2}}\Paren{z \,; \frac{\gamma}{\beta^2 + \gamma^2} (k-c) } = \exp\Paren{-\pi \Brac{\frac{\Paren{k-c}^2}{\beta^2 + \gamma^2} + \frac{\Paren{\beta^2 + \gamma^2} \cdot\Paren{z - \tfrac \gamma {\beta^2 + \gamma^2} (k-c)}^2}{\beta^2}}} \,.
    \end{align*}
    Focusing only on the expression inside the exponential function (and ignoring the $\pi$) we obtain
    \begin{align*}
        \frac{\Paren{k-c}^2}{\beta^2 + \gamma^2} + \frac{\Paren{\beta^2 + \gamma^2} \cdot\Paren{z - \tfrac \gamma {\beta^2 + \gamma^2} (k-c)}^2}{\beta^2} &= \frac{\Paren{k-c}^2 \cdot \beta^2 + \Brac{\Paren{\beta^2 + \gamma^2} \cdot z - \gamma \cdot \Paren{k-c}}^2}{\Paren{\beta^2 + \gamma^2}\cdot \beta^2}\\
        &= \frac{\Paren{\beta^2 + \gamma^2} \cdot z^2 + \Paren{k-c}^2 - 2 \cdot \Paren{k-c} \cdot \gamma\cdot z }{\beta^2} \\
        &= \frac{\Paren{\Paren{k-c} - \gamma \cdot z}^2}{\beta^2} + z^2 \,.
    \end{align*}
    Hence, it follows that
    \begin{align*}
        \rho_{\sqrt{\beta^2 + \gamma^2}}(k \,; c) \cdot \rho_{\beta / \sqrt{\beta^2 + \gamma^2}}\Paren{z \,; \frac{\gamma}{\beta^2 + \gamma^2} (k-c) } = \rho_\beta \Paren{\gamma \cdot z \,; k-c} \cdot \rho(z)
    \end{align*}
    which implies the claim.
\end{proof}

\end{document}